\definecolor{linkblue}{RGB}{0,0,139}      
\definecolor{navy}{RGB}{0,0,128}          
\definecolor{royalblue}{RGB}{65,105,225}  
\definecolor{steelblue}{RGB}{70,130,180}  
\definecolor{dodgerblue}{RGB}{30,144,255} 
\definecolor{mediumblue}{RGB}{0,0,205}    
\definecolor{darkslateblue}{RGB}{72,61,139} 
\lstdefinestyle{prompt}{
  basicstyle=\ttfamily\footnotesize,
  breaklines=true,           
  breakatwhitespace=false,   
  columns=fullflexible,      
  keepspaces=true,           
  showstringspaces=false,
  postbreak=\mbox{\textcolor{gray}{$\hookrightarrow$}\space} 
}
\newtheorem{remark}{Remark}
\newtheorem{proposition}{Proposition}
\newcommand{\E}{\mathbb{E}}
\newcommand{\numcell}[2][3.6em]{%
  \makebox[#1][r]{#2\hspace{1.8em}}%
}
\newcommand{\posdelta}[1]{%
  \makebox[0pt][r]{\smash{\raisebox{-0.25ex}{\scriptsize\textcolor{ForestGreen}{$+#1$}}}}%
}
\newcommand{\twocol}[1]{\multicolumn{2}{c|}{#1}}   
\newcommand{\twocolend}[1]{\multicolumn{2}{c}{#1}} 
\newcommand{\groupheader}[1]{%
  \specialrule{0pt}{0.8ex}{0pt}
  \multicolumn{11}{c}{#1}\\
  \specialrule{0.4pt}{0.6ex}{0pt}
}
\newcommand{\Lcell}[1]{\multicolumn{1}{l}{#1}}   
\newcommand{\Lcellbar}[1]{\multicolumn{1}{l|}{#1}} 
\newcommand{\dlmsubhead}{%
  & \Lcell{Static} & \Lcellbar{Dynamic}
  & \Lcell{Static} & \Lcellbar{Dynamic}
  & \Lcell{Static} & \Lcellbar{Dynamic}
  & \Lcell{Static} & \Lcellbar{Dynamic}
  & \Lcell{Static} & \Lcell{Dynamic}\\
  \cline{2-11}
}
\title{Revolutionizing Reinforcement Learning Framework for Diffusion Large Language Models}
\author[1,2*]{Yinjie Wang}
\author[1*\ \Letter]{Ling Yang}
\author[1]{Bowen Li}
\author[1]{Ye Tian}
\author[ ]{Ke Shen}
\author[1]{Mengdi Wang}
\affil[*]{Equal contributions}
\affil[ \Letter]{Corresponding author}
\affil[1]{Princeton University}
\affil[2]{University of Chicago}
\begin{abstract}

{\fontsize{12pt}{12pt} \selectfont \raisebox{-0.06em}{\includegraphics[height=1em]{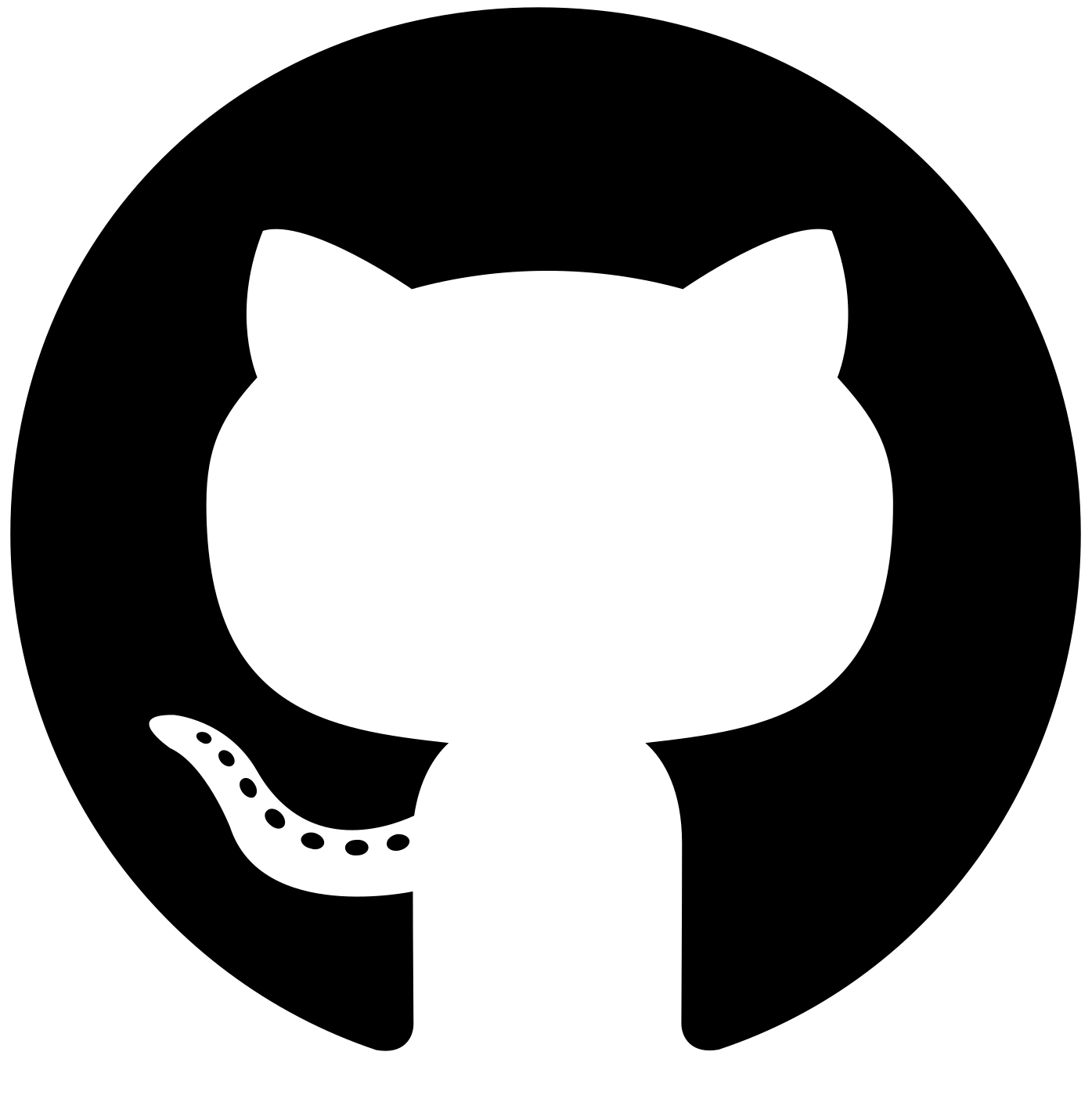}} Code: \href{https://github.com/Gen-Verse/dLLM-RL}{https://github.com/Gen-Verse/dLLM-RL} \quad \raisebox{-0.06em}{\includegraphics[height=1em]{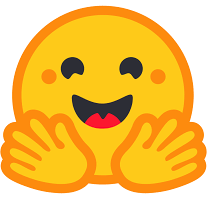}} Models: \href{https://huggingface.co/collections/Gen-Verse/trado-series-68beb6cd6a26c27cde9fe3af}{TraDo-4B/8B}}
\\
\\
We propose \textbf{TraceRL}, a trajectory-aware reinforcement learning framework for diffusion language models (DLMs) that incorporates preferred inference trajectory into post-training, and is applicable across different architectures. Equipped with a diffusion-based value model that enhances training stability, we demonstrate improved reasoning performance on complex math and coding tasks. Besides, it can also be applied to adapt block-specific models to larger blocks, which improves sampling flexibility. Employing TraceRL, we derive a series of state-of-the-art diffusion language models, namely \textbf{TraDo}. Although smaller than 7B-scale AR models, TraDo-4B-Instruct still consistently outperforms them across complex math reasoning tasks. TraDo-8B-Instruct achieves relative accuracy improvements of 6.1\% over Qwen2.5-7B-Instruct and 51.3\% over Llama3.1-8B-Instruct on mathematical reasoning benchmarks. Through curriculum learning, we also derive the first long-CoT DLM, outperforming Qwen2.5-7B-Instruct on MATH500 with an 18.1\% relative accuracy gain.
To facilitate reproducible research and practical applications, we release a comprehensive open-source framework for building, training, and deploying diffusion LLMs across diverse architectures. The framework integrates accelerated KV-cache techniques and inference engines for both inference and reinforcement learning, and includes implementations of various supervised fine-tuning and RL methods for mathematics, coding, and general tasks. 
\end{abstract}
\begin{document}

\maketitle

\begin{figure}[h]
  \centering
  
  \newlength{\panelht}
  \setlength{\panelht}{0.25\textheight}

  \begin{minipage}[t]{0.5\textwidth}
    \centering
    \vspace*{0pt}
    \includegraphics[height=\panelht]{./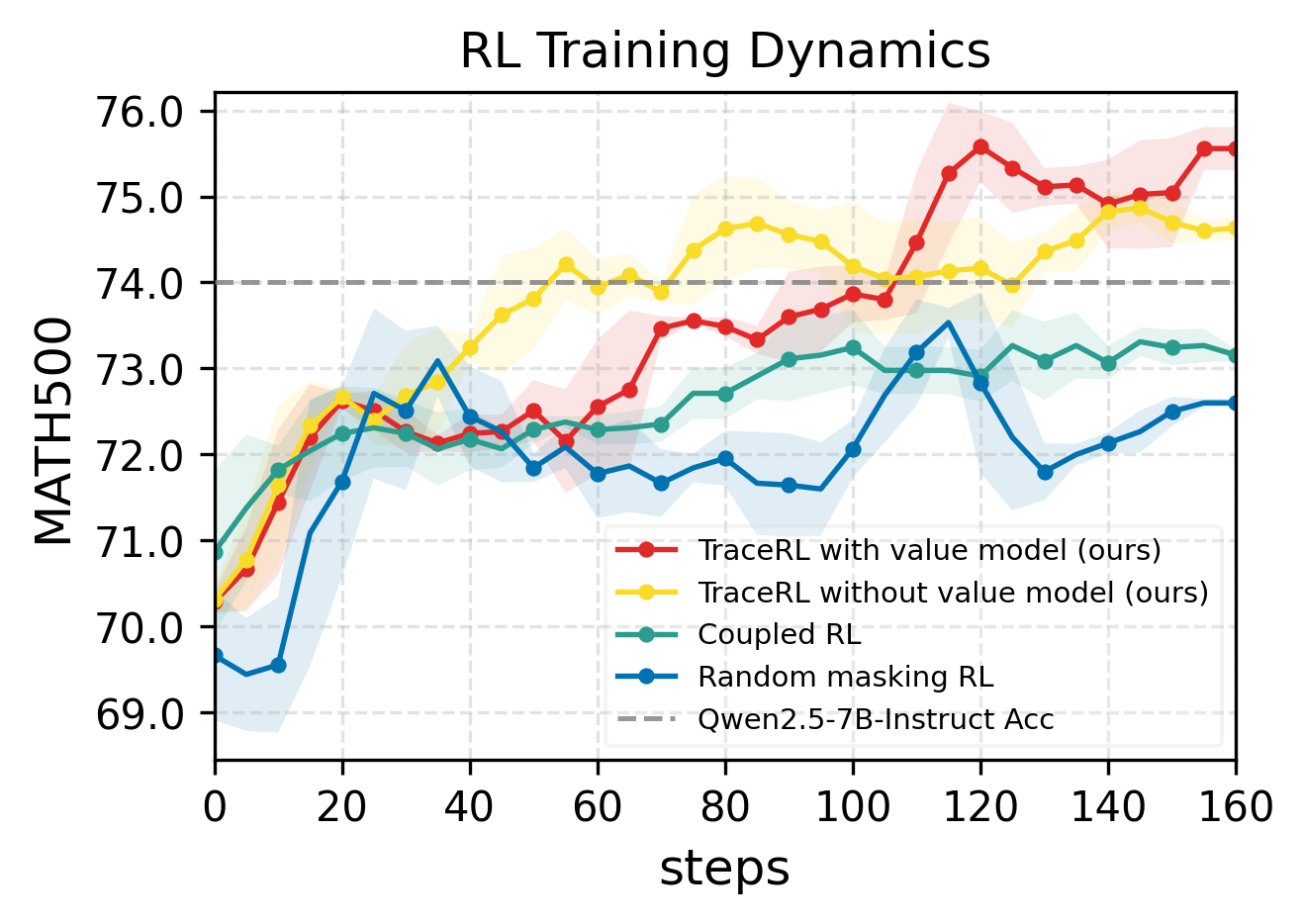}
  \end{minipage}
  \hfill
  \begin{minipage}[t]{0.47\textwidth}
  \centering
  \vspace*{2\baselineskip}
  \setlength{\tabcolsep}{4pt}     
  \renewcommand{\arraystretch}{1.05} 
  \footnotesize                   
  \begin{tabular}{@{}lrrr@{}}      
    \toprule
    \textbf{Model} & \textbf{MATH500} & \textbf{AIME2024} & \textbf{LCB-V2} \\
    \midrule
    \rowcolor[gray]{.9}\textbf{TraDo-8B-Thinking} & \textbf{87.4} & \textbf{35.5} & \textbf{34.6} \\
    \rowcolor[gray]{.9}\textbf{TraDo-8B-Instruct} & \textbf{78.5} & \textbf{13.3} & \textbf{25.9}\\
    \rowcolor[gray]{.9}\textbf{TraDo-4B-Instruct} & \textbf{75.6} & \textbf{10.3} & \textbf{18.7}\\
    Qwen2.5-7B-Instruct & 74.0 & 8.2 & 26.9\\
    Llama3.1-8B-Instruct & 51.9 & 6.7 & 20.0\\
    Dream-7B-Instruct & 38.7 & 0.0 & 10.7\\
    LLaDA-8B-Instruct & 38.3 & 1.7 & 5.9\\
    \bottomrule
  \end{tabular}
\end{minipage}
  \caption{\textit{Left}: RL training dynamics with different methods, where our TraceRL achieves the best optimization. \textit{Right}: Benchmark results on complex math reasoning tasks and LiveCodeBench-V2, all evaluated with KV-cache. TraDo series models outperform strong AR models even at smaller scales.}
\end{figure}

\section{Introduction}

To apply diffusion models \citep{ho2020denoising, song2019generative} on language tasks, approaches like projecting discrete tokens into a continuous
latent space \citep{graves2023bayesian, dieleman2022continuous, gulrajani2023likelihood}, and employ state transition matrices to directly derive the diffusion process \citep{austin2021structured, sahoo2024simple, shi2024simplified, ou2024your}, have been explored. Masked diffusion
models (MDMs) have emergyed as promising and scalable architecture for diffusion language model (DLM) recently \citep{nie2025large, dream2025}.
DLMs enable massive inference acceleration by parallel generation \citep{labs2025mercury, google_deepmind_gemini_diffusion_2025, song2025seed}, and improve consistency through bidirectional attention \citep{ye2024beyond, zhang2023planner}.

Diffusion language models with full attention have been explored in coding tasks \citep{labs2025mercury, google_deepmind_gemini_diffusion_2025, gong2025diffucoder, dreamcoder2025}, fixed-format reasoning tasks \citep{ye2024beyond, dream2025}. Beyond the full attention mechanism, block-attention diffusion models \citep{arriola2025block} have been scaled to large language models that excel at complex reasoning tasks \citep{JetAstra2025}. However, a unified and effective reinforcement learning (RL) framework across different model architectures remains underexplored.

Existing post-training frameworks for full-attention DLMs estimate the score of the whole sequence by adding random masks, which causes a mismatch with the optimal inference process \citep{dream2025, nie2025large, gong2025diffucoder, yang2025mmada}, since the sequential and logical nature of language is not purely random \citep{arriola2025block, gong2025diffucoder}. Block diffusion employs a semi-autoregressive supervised fine-tuning method \citep{arriola2025block}, while its reinforcement learning remains unexplored.
In this work, we demonstrate the importance of aligning the inference trajectory with the training objective, and propose a trajectory-aware reinforcement learning method, TraceRL, along with a diffusion-based value model. Our approach can be applied to both full-attention and block-attention models, achieving fast optimization.

Solely trained with TraceRL, we obtain 4B- and 8B-sized state-of-the-art diffusion language instruction models on reasoning tasks, both surpassing strong autoregressive models on math reasoning benchmarks. Furthermore, we develop the first long-CoT diffusion language model, TraDo-8B-Thinking, through a combination of TraceRL and long-CoT SFT.

We summarize our main contributions as follows:
\begin{itemize}
    \item We highlight the importance of aligning the training objective with the sampling trajectory. Building on this insight, we propose TraceRL, a fast and versatile reinforcement learning method applicable across diverse DLM architectures, together with a diffusion-based value model that reduces variance and improves training stability. We also explore TraceRL’s broader potential applications, including increasing the model’s block size and accelerating inference.
    \item Using TraceRL, we achieve significant improvements in reasoning ability across diverse tasks, resulting in the TraDo series. TraDo-4B-Instruct consistently outperforms strong AR models on various math reasoning datasets, and TraDo-8B-Instruct achieves relative accuracy gains of \textbf{6.1\%} over Qwen2.5-7B-Instruct and \textbf{51.3\%} over Llama3.1-8B-Instruct.
    \item Through curriculum learning combined with TraceRL and long-CoT SFT, we obtain the first long-CoT DLM, achieving relative accuracy gains of \textbf{18.1\%} on MATH500 and \textbf{28.6\%} on LiveCodeBench-V2 over Qwen2.5-7B-Instruct, demonstrating that diffusion language models can also perform complex reasoning.
    \item We release a complete and integrated framework for building, training, and deploying diffusion LLMs across diverse architectures. The framework incorporates implementations of various post-training methods and accelerated KV-cache techniques, enabling both reproducible research and practical applications.
\end{itemize}

\section{Preliminaries}
\label{prelim}

\subsection{Masked Diffusion Language Models with Full Attention}

Masked diffusion language models randomly replace non-masked tokens in the raw sample \(x_0\) with mask tokens \([{\rm MASK}]\), resulting in \(x_t\), where \(t \in [0, 1]\). This process can be formulated as:
\begin{align}
\label{process}
q(x_t \mid x_0) = \prod_{i=1}^{n} \mathrm{Cat}\!\left(x_t^i; (1 - t) \delta_{x_0^i} + t \delta_{\text{[MASK]}}\right).
\end{align}
It has been demonstrated that training objective of DLM can be derived from data likelihood \citep{shi2024simplified, sahoo2024simple, zheng2024masked, ou2024your}. Specifically, the optimization objective is the evidence lower bound of $\log p_{\theta} (x)$:
\begin{align}
\label{dlmobj}
\mathcal{J}_{full}(x_0, Q, \theta) = \int_{0}^{1} \frac{1}{t |x_0|} \, 
\mathbb{E}_{q(x_t \mid x_0)} \left[ 
\sum_{i: x_t^i = \text{[MASK]}} 
\log p_{\theta}(x_0^i \mid x_t, Q) 
\right] dt,
\end{align}
where $Q$ stands for the prompt part, and $|x_0|$ stands for the number of tokens in $x_0$.

\subsection{Adapted From AR Models}

Besides training from scratch with this objective \citep{nie2025large,yang2025mmada}, adapting from an auto-regressive model has been shown to be a promising approach for preserving the AR model's capabilities \citep{dream2025}, while only requiring adaptation training. 
This approach is more efficient than training from scratch and has led to a series of powerful DLMs \citep{dream2025, gong2025diffucoder, dreamcoder2025}. Its theoretical supervised fine-tuning objective is the same as Equation~\ref{dlmobj}; the only difference is that a right-shift of the logits output is required, since its base model is an autoregressive LLM.

\subsection{Block Diffusion}

Block diffusion models \citep{arriola2025block, JetAstra2025} employ a block-diffusion attention mechanism that combines the training efficiency of autoregressive models with the sampling efficiency of diffusion models. Moreover, their block-wise generation makes KV-cache support naturally available, whereas full-attention DLMs experience performance degradation when using KV-cache \citep{hu2025accelerating, liu2025dllm, ma2025dkv, arriola2025block, yu2025dimple, wu2025fast}.
A key limitation of block diffusion lies in its fixed block size. When the block size is small, its sampling speed potential falls short of that of full-attention DLMs, since fast sampling fundamentally depends on generating a large number of tokens at once \citep{song2025seed}. This highlights the need to adapt block diffusion to larger block sizes in order to achieve sampling flexibility.

\subsection{Decoding Strategy and Accelerated Inference}
\label{sampleintro}

The reverse process (Equation~\ref{process}) can be accelerated through an iterative generation procedure, in which multiple masked tokens are approximately recovered in a single step when transitioning from a noise level $t$ to a lower level $s < t$. Specifically, given a masked sequence $x_t$, we select [MASK] tokens to be unmasked based on the confidence at each position $i$, defined as $\max_{x_s^i} p_{\theta}(x_s^i \mid x_t, Q)$. A common strategy is to unmask either a fixed number of tokens with the highest confidence at each step \citep{chang2022maskgit, kim2025train}, referred to as \textbf{static sampling}, or all positions whose confidence exceeds a predefined threshold $\mathcal{T}$ \citep{yu2025dimple, wu2025fast}, referred to as \textbf{dynamic sampling}, with the latter often being faster. By iteratively applying this procedure starting from a fully masked sequence, we eventually obtain a fully unmasked sequence.
This parallel decoding strategy, combined with KV-cache techniques \citep{hu2025accelerating, liu2025dllm, ma2025dkv, arriola2025block, yu2025dimple, wu2025fast}, gives DLMs faster sampling speeds than LLMs \citep{labs2025mercury, google_deepmind_gemini_diffusion_2025, song2025seed}.

\section{Mismatch Between Post-Training Objective and Inference Trajectory}
\label{mismatch}

Pretraining with the fully random masking objective (Equation~\ref{dlmobj}) enables parallel decoding. However, language inherently depends on previous context. When combined with the chosen decoding strategy and the widely adopted block-wise generation with KV-cache, this creates a mismatch between the post-training objective and the model’s inference behavior. In this section, we present a simple experiment to illustrate this discrepancy.

\begin{figure}[t!]
  \centering
  \includegraphics[width=0.99\textwidth]
  {./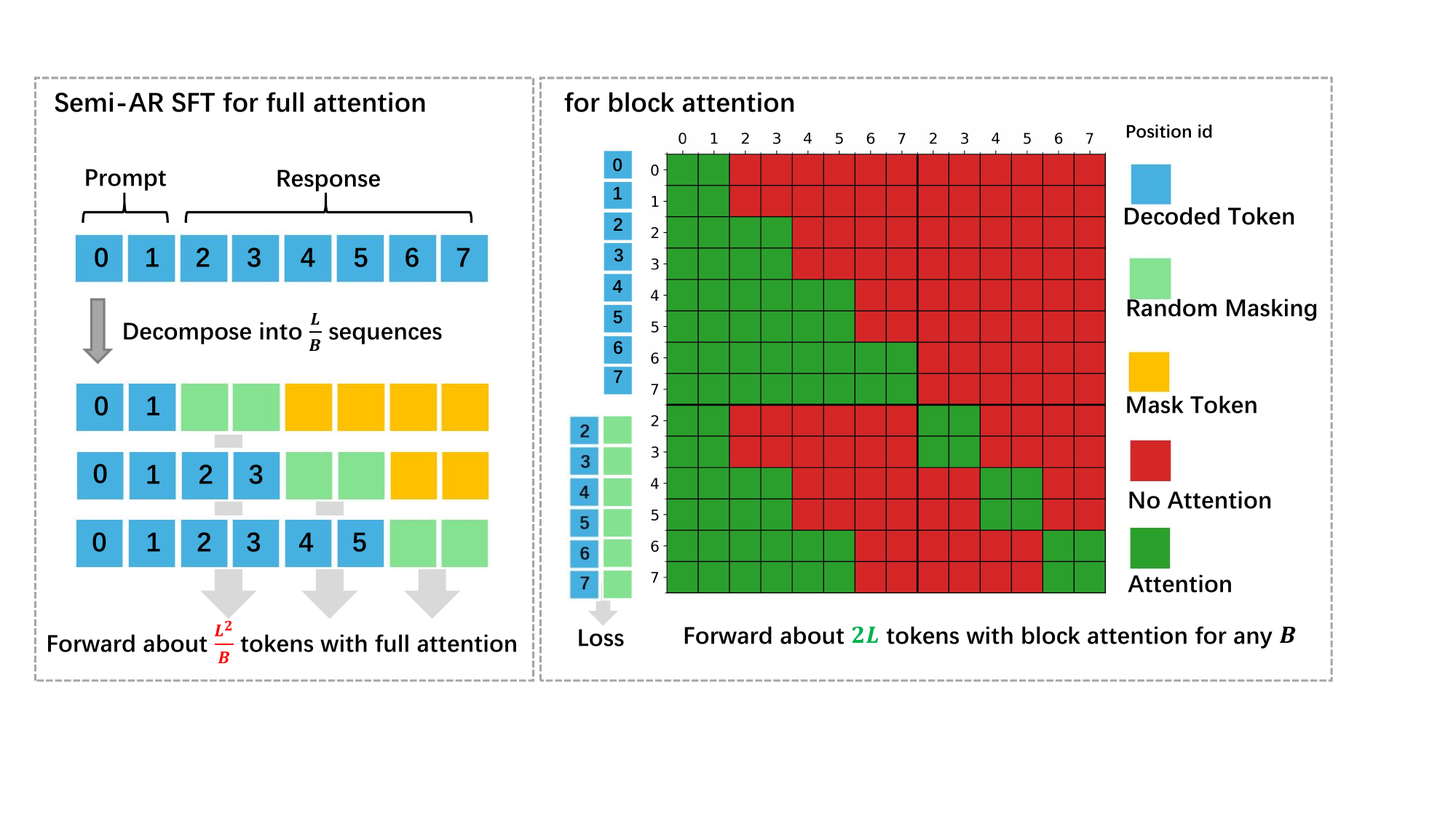}
  \caption{Semi-AR SFT for full attention and block attention model Overview. This example uses a block size of $B = 2$ and a sequence length of $L = 6$. Block diffusion models naturally adapt to semi-AR SFT efficiently, whereas full attention models require slicing the data by $B$.}
  \label{sft}
\end{figure}

\subsection{Semi-Autoregressive Fine-Tuning}

Block diffusion models naturally employ semi-autoregressive fine-tuning with the following objective, which trains the language model’s ability to generate later tokens conditioned on earlier context using block attention, while preserving the sampling efficiency characteristic of DLMs.
\[
\mathcal{J}_{\text{semi}}(x, Q, \theta) = \sum_{i = 1}^{\lceil L / B \rceil} \mathcal{J}_{\text{full}}\big(x^{(i - 1)B:\min(iB, L)}, [Q, x^{0:(i - 1)B}], \theta\big).
\]
For full-attention DLMs, applying the semi-autoregressive objective inevitably increases the number of forward passes by approximately $\lceil L / B \rceil$ (see Figure~\ref{sft}). Nevertheless, the optimization performance of $\mathcal{J}_{\text{semi}}(x, Q, \theta)$ is substantially better than that of $\mathcal{J}_{\text{full}}(x, Q, \theta)$ under $\lceil L / B \rceil$ independent repetitions, where the computational load is the same (see Table~\ref{tracesft}). This demonstrates the importance of aligning the post-training objective with the general left-to-right inference pattern.

\subsection{Aligning Post-training with Preferred Inference Traces}

To further investigate, we collect the pre-trained model's own preferred inference traces from the data and use them for finetuning. Specifically, we apply static sampling (see definition in Section~\ref{sampleintro}) to obtain the optimal trace for each data point, which is then used in our finetuning (see Section~\ref{misdetail} for details).

\begin{table}[t]
  \centering
    \caption{We explore how effectively different methods tune the model to learn CoT reasoning in order to improve reasoning accuracy under non-CoT prompts. The 2000 datapoints are generated by Qwen2.5 models and filtered to retain high quality. $l$ denotes the length of each collected step in the whole trace. ``$\times m$" indicates that we apply $m$ independent random maskings to augment the data scale for a fair comparison. ``Token forward'' denotes the number of tokens processed by the model, representing the computational load or time. ``Token trained'' refers to the number of tokens actually used in the optimization objective. We report accuracy on MATH500 here. The block-attention model used here is SDAR-4B-Chat (default block size of $4$), and the full-attention model is Dream-7B-Instruct.}
  \resizebox{0.95\textwidth}{!}{%
    \begin{tabular}{lccccccccc}
      \toprule
      \multirow{4}{*}{Token Utilization Efficiency}& \multicolumn{5}{c}{Full Attention} & & \multicolumn{3}{c}{Block Attention} \\
      \cmidrule(lr){2-6} \cmidrule(lr){8-10}
      & trace & \multicolumn{2}{c}{semi-ar ($B=16$)} & \multicolumn{2}{c}{fully random} & & trace & \multicolumn{2}{c}{semi-ar ($B=4$)} \\
      \cmidrule(lr){2-2} \cmidrule(lr){3-4} \cmidrule(lr){5-6} \cmidrule(lr){8-8} \cmidrule(lr){9-10}
      & $l=16$ & $\times 2$ & $\times 1$ & $\times 35$ & $\times 1$ & & $l=2$ & $\times 2$ & $\times 1$ \\
      \midrule
      accuracy (\%)        & 54.4 & 53.4 & 52.6 & 45.1 & 39.6 & & 71.3 & 70.4 & 70.0 \\
      \# token forward (M) & 39.2 & 78.4 & 39.2 & 39.2 & 1.1 & & 4.5 & 4.5 & 2.2 \\
      \# token trained (M) & 1.1 & 1.1 & 0.6  & 20.0 & 0.6 & & 1.1 & 1.1 & 0.6 \\
      \bottomrule
    \end{tabular}%
  }

  \label{tracesft}
\end{table}

Table~\ref{tracesft} shows that using the model’s own preferred inference traces achieves optimal performance over the baselines, even when the computational load is equal to or lower than that of the baselines, for both block-attention and full-attention structures.

\subsection{Reinforcement Learning Naturally Leverages Inference Traces}

A major limitation of this fine-tuning approach is that collecting inference traces requires substantial effort. In contrast, reinforcement learning naturally produces these traces during rollouts, making it a more practical and effective strategy for post-training. Accordingly, we propose TraceRL in the following section.

\section{RL Training with Trajectory}

Current RL methods for DLMs focus on full-attention models, rewarding or penalizing rollout responses based on the overall generated sequence \citep{zhao2025d1, yang2025mmada, gong2025diffucoder} through the random masking objective $\mathcal{J}_{\text{full}}$ (Equation~\ref{dlmobj}). 
We propose \textbf{TraceRL} (Figure~\ref{rl}, detailed pipeline in \cref{algapp}), which instead focuses on the intermediate traces generated by the DLM, and can be applied across different architectures. We also introduce a diffusion-based value model, which helps reduce variance and improve stability in training. We demonstrate that this leads to a more effective optimization.

For each generated response $\tau_i$ given the task $Q$, we can write it in the trajectory form 
$\tau_{i} \stackrel{\triangle}{=} (\tau_{i}(1), \dots, \tau_{i}(|\tau_{i}|))$, 
where $|\tau_{i}|$ is the total number of decoding steps, and $\tau_{i}(t)$ is the set of tokens decoded during the $t$-th step. 
TraceRL rewards or penalizes the generation trajectory of policy $\pi_{\theta}$, based on the verifiable reward $r_i$ for the response $\tau_i$. Process-level rewards can also be incorporated through the use of a value model.

\begin{figure}[t!]
  \centering
  \includegraphics[width=0.97\textwidth]
  {./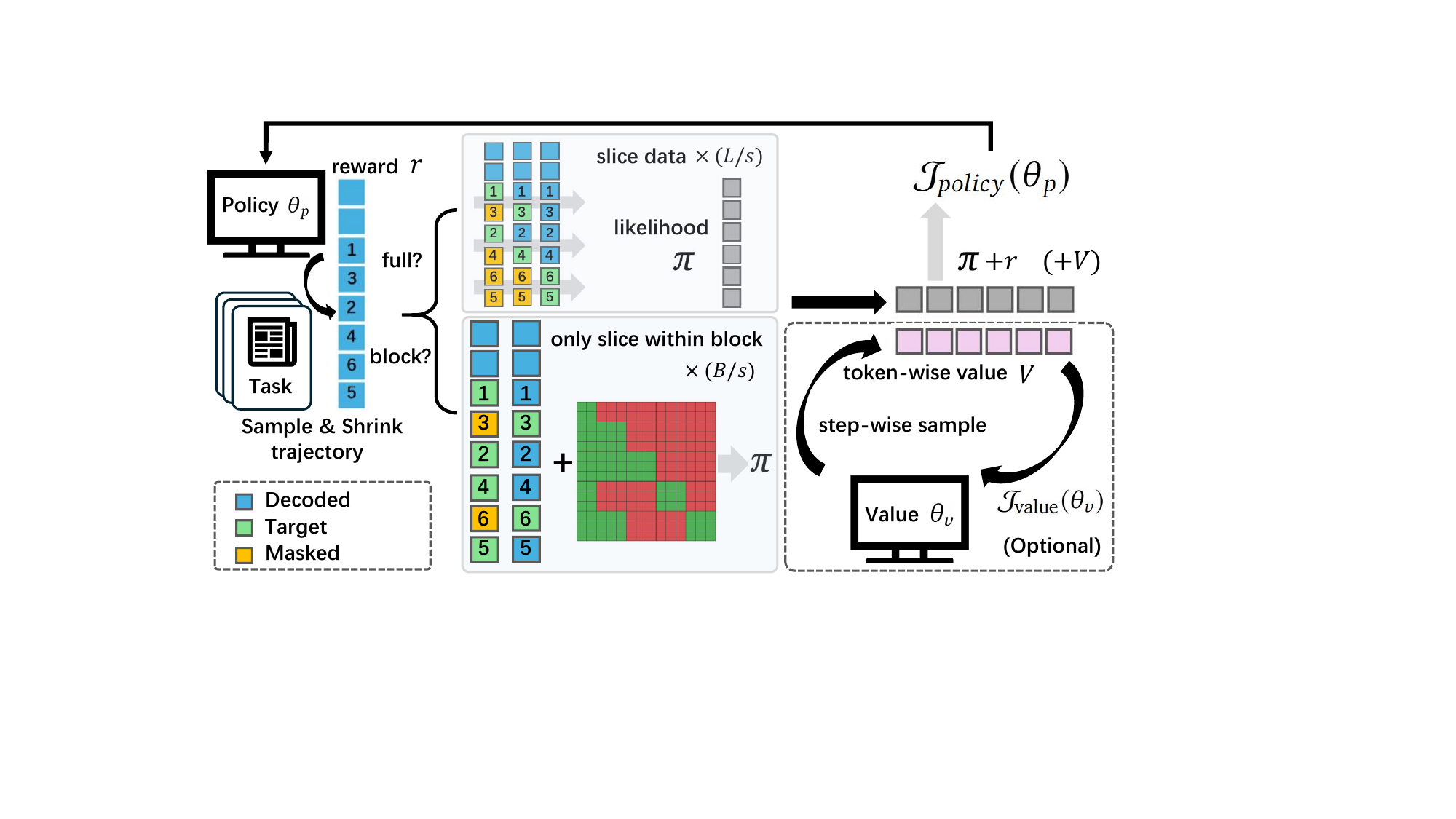}
  \caption{TraceRL Overview (detailed pipeline in \cref{algapp}). This is an example for $s = 2$, $L = 6$ and $B = 3$. We aggregate every $s$ neighboring steps to perform trajectory-aware reinforcement learning. The integers in these squares represent the sequence of the policy’s inference process.}
  \label{rl}
\end{figure}

\subsection{Accelerated Training with Shrinkage Parameter}

For full-attention models, decomposing the sequences based on each sampling step leads to a large number of forward passes during training. So we introduce a shrinkage parameter $s$, which aggregates every $s$ neighboring steps to improve training efficiency.
Formally, we shrink the trajectory $\tau_i$ into 
$\tau_i^s \stackrel{\triangle}{=} (\tau_{i}^s(1), \dots, \tau_{i}^s(|\tau_i^s|))$, where $\tau_i^s(k) \stackrel{\triangle}{=} \cup_{j = s(k - 1) + 1}^{\min( s k, |\tau_i|)} \tau_i(j)$ and $ |\tau_i^s| = \lceil |\tau_{i}|/s \rceil$,
then optimize the following objective:
\begin{align}
\label{objpolicy}
\mathcal{J}_{policy}(\theta_p) = \E_{\substack{Q \sim D_{\text{task}}\\\{\tau_i \}_{i = 1}^G \sim \pi_{\text{old}} (\cdot \mid Q)}} \Big[ \sum_{i = 1}^G \sum_{t = 1}^{ |\tau^s_i|} \sum_{o_k \in \tau_{i, t}^s} C_{\epsilon}( \frac{\pi_{\theta_p} (o_k \mid \tau_{i}^s(1:(t-1)))}{\pi_{\text{old}} (o_k \mid \tau_{i}^s(1:(t-1)))}, A_i) / |\tau_{i}^s(t)| \Big] - \beta \mathbb{KL}[\pi_{\theta}||\pi_{\text{old}}],
\end{align}
where $C_{\epsilon}(r, A) \stackrel{\triangle}{=} \min (r A, \text{clip} (r, 1 - \epsilon, 1 + \epsilon)A )$, $\tau_i^s(1:t) \stackrel{\triangle}{=} \cup_{j = 1}^t \tau_i^s(j)$, $\pi_{\text{old}}$ is the old policy, and $A_i$ is the standardized reward.

By introducing the shrinkage parameter $s$, we reduce the training computation complexity by a factor of $s$. We can directly use $\mathcal{J}_{policy}(\theta_p)$ for efficient RLVR (reinforcement learning with verifiable rewards) training.

\subsection{Value Model with Diffusion Modeling}

Rather than assigning a single sequence-level advantage to all tokens in a completion, a value function enables prefix-conditioned, token-wise advantages (e.g., GAE-based estimates), providing a variance-reducing baseline that typically stabilizes policy optimization \citep{schulman2017proximal, hu2025open}. We adopt a diffusion-based value model to estimate step-wise values.
For each rolled-out trajectory \(\tau\) (possibly shrunken by a shrinkage factor \(s\)), we keep the notation \(\tau\) for brevity. We evaluate a frozen value network \(V_{\text{old}}\) to obtain token-wise values along the trajectory: $V^{\text{old}}_j \triangleq\big(V_{\text{old}}(\tau)\big)_j$, $j\in \tau .$
Concretely, for each trace step \(t\), we condition on the prefix \(\tau(1{:}t-1)\) and predict the values for all tokens generated at that step, i.e.,$
\{\,V^{\text{old}}_j : j \in \tau(t)\,\}.$
We then define the step-wise value at step \(t\) as $V_{t}^{\star,\text{old}} = \sum_{j\in \tau(t)} V^{\text{old}}_j / |\tau(t)|.$
\(V^{\text{old}}\) is treated as a \emph{stop-gradient} baseline when constructing returns and advantages; the learnable value network \(V_{\theta_v}\) is only updated via its own regression objective later.

We now derive the training objective. Given token-wise rewards \(r_j\), we first form the step-wise reward \(r_t^{\star} = \sum_{j \in \tau(t)} r_j / |\tau(t)|\). We compute step-wise returns backward as \(R_t^{\star} = r_t^{\star} + \gamma R_{t+1}^{\star}\) for \(t\le|\tau|\) with terminal \(R_{|\tau| + 1}^{\star} = 0\). Using the frozen baseline, define \(V_t^{\star,\text{old}} = \sum_{j \in \tau(t)} V_j^{\text{old}} / |\tau(t)|\), the TD residual \(\delta_t^{\star} = r_t^{\star} - V_t^{\star,\text{old}} + \gamma V_{t+1}^{\star,\text{old}}\), and the step-wise GAE \(A_t^{\star} = \delta_t^{\star} + \gamma \lambda A_{t+1}^{\star}\). We then map step-level quantities back to tokens via \(R_j = r_j + \gamma R_{t_j+1}^{\star}\) and \(A_j = (r_j - V_j^{\text{old}}) + \gamma V_{t_j+1}^{\star,\text{old}} + \gamma \lambda A_{t_j+1}^{\star}\), where \(t_j\) is such that \(j \in \tau(t_j)\). The value network is trained with a clipped regression loss:
\begin{align}
\label{objvalue}
\mathcal{J}_{\text{value}}(\theta_v) = \tfrac{1}{2}\,\mathbb{E}_{\tau}\Big[\tfrac{1}{|\tau|}\sum_{j \in \tau} \max\big((V_{\theta_v}(\tau)_j - R_j)^2,\; (V_j^{\text{clip}} - R_j)^2\big)\Big],
\end{align}
where \(V_j^{\text{clip}} = V_j^{\text{old}} + \mathrm{clip}\,\!\big(V_{\theta_v}(\tau)_j - V_j^{\text{old}},\, -\epsilon,\, \epsilon\big)\). The token advantages \(A_j\) are used in \(\mathcal{J}_{\text{policy}}(\theta_p)\) to update the policy. We provide the explicit forms of $R_j$ and $A_j$ in Proposition~\ref{prop:token-from-trace} (see Section~\ref{proofsect} in Appendix).

\subsection{Sliced Training in Block Diffusion}

Block diffusion employs a block-attention mechanism for efficient supervised fine-tuning, and we extend this advantage to reinforcement learning. For each derived and processed trace $\tau$ generated by block-wise inference, we represent it as $\tau = (b_1, \dots, b_{\lceil |\tau| / B \rceil})$, where each block is defined as $b_k = (\tau_{k,1}, \dots, \tau_{k, |b_k|})$ with $|b_k|$ steps. We denote the block size as $B$. The training objective can then be sliced from $\sum_{i=1}^{|\tau|} f(\tau(i))$ into a $B'$ training slice, $\left(\sum_{k=1}^{\lceil |\tau|/ B \rceil} f(\tau_{k,l}) \mathbf{1}_{\{l \le |b_k|\}}\right)_{l = 1}^{B'}$, where $f$ is a task-specific function and $B' = \max_k\{\, |b_k|\,\} \le \lceil B / s \rceil$. Each slice only needs to be forwarded once using block attention (Figure~\ref{rl}). This formulation maximizes the utility of the block-attention mechanism and enables highly parallel and efficient training, applicable to both policy and value model training, which is significantly more efficient than full-attention training.

\section{Experiments}

In this section, we demonstrate the superiority and broad applicability of TraceRL. We present evaluation results for our state-of-the-art 4B and 8B instruction models trained solely with TraceRL, as well as for the long-CoT model TraDo-8B-Thinking, trained by a combination of TraceRL and SFT. We also highlight interesting applications such as block size enlargement and the observation of acceleration effects.

\subsection{Experimental Setups}

\subsubsection{Data}

We use different data source for reinforcement learning. For the Math tasks, we choose the MATH training dataset \citep{hendrycks2021measuring} and retain only level 3–5 tasks \citep{hu2025open}, resulting in 8K hard tasks. For the coding RL setting, we use 6K verified problems from PrimeIntellect \citep{jaghouar2024intellect}, verified by DeepCoder \citep{deepcoder2025}.

Our evaluation focuses on reasoning tasks in mathematics and coding. For mathematics, we use GSM8K \citep{cobbe2021training}, MATH500 \citep{hendrycks2021measuring}, and AIME \citep{AIME2024}. For coding, we use LiveCodeBench-V2 \citep{jain2024livecodebench} and LiveBench \citep{white2024livebench}.

\subsubsection{Models and Optimization}

Our experiments include both full-attention and block-attention models. For full attention, we use Dream-7B-Instruct \citep{dream2025} as the base model, and for block attention, we use the SDAR \citep{JetAstra2025} series of models, trained with a block size of 4.

We now describe our reinforcement learning settings. For the block-diffusion model, during each sampling step we sample 128 tasks and 32 responses per task, with a dynamic sampling strategy ($\mathcal{T}=0.9$) and temperature 1.0. For the full-attention model, during each RL sampling step we sample 56 tasks and generate 8 responses per task from the policy $\pi_{\theta}$ using KV-Cache, with a temperature of 1.0 and a static sampling strategy (one token per step) to improve rollout quality. During training, we set the learning rate of the policy to $1 \times 10^{-6}$, with $\epsilon = 0.2$ and $\beta = 0.01$. When using the value model, we set the learning rate for RL to $5 \times 10^{-6}$, and use $\gamma = \lambda = 1.0$ as default values. We provide additional optimization details in the appendix (Sections~\ref{appsample} and \ref{traindetailapp}).

\subsubsection{Evaluation}

During evaluation, we report results for both static and dynamic sampling. For the block-attention model, we use temperature $t = 1.0$ for dynamic sampling and greedy decoding (top-$k = 1$ for static sampling), with a default block size of $4$. For the full-attention model, we use temperature $t = 0.1$ with block size $4$ for static sampling and block size $32$ for dynamic sampling. See Appendix (Section~\ref{appeval} for evaluation details and Section~\ref{apptemp} for prompt templates).

\begin{table*}[t!]
\centering
\caption{The main benchmark results across different math and coding tasks. ``Static" refers to static sampling, and ``Dynamic" refers to dynamic sampling. The long-CoT model TraDo-8B-Instruct here is evaluated by dynamic sampling with threshold $0.9$.}
\label{maintable}
\begin{adjustbox}{max width=\textwidth}
\begin{tabular}{l|cc|cc|cc|cc|cc}
\specialrule{1.2pt}{0pt}{0pt}
\multirow{1}{*}{\textbf{Model}} &
  \multicolumn{2}{c|}{\textbf{MATH500}} &
  \multicolumn{2}{c|}{\textbf{AIME2024}} &
  \multicolumn{2}{c|}{\textbf{GSM8K}} &
  \multicolumn{2}{c|}{\textbf{LiveCodeBench-v2}} &
  \multicolumn{2}{c}{\textbf{LiveBench}} \\
\hline\hline
\groupheader{\textbf{Autoregressive Models}}
Llama3.1-8B-Instruct &
  \twocol{\numcell{51.9}} & \twocol{\numcell{6.7}} &
  \twocol{\numcell{84.5}} & \twocol{\numcell{20.0}} &
  \twocolend{\numcell{19.7}} \\
Qwen2.5-7B-Instruct &
  \twocol{\numcell{74.0}} & \twocol{\numcell{8.2}} &
  \twocol{\numcell{89.9}} & \twocol{\numcell{26.9}} &
  \twocolend{\numcell{31.1}} \\
\hline
\groupheader{\textbf{Diffusion Language Models}}
\dlmsubhead
LLaDA-8B-Instruct  & \numcell{37.3} & \numcell{38.3} & \numcell{0.5} & \numcell{1.7} & \numcell{82.5} & \numcell{82.5} & \numcell{5.9} & \numcell{5.5} & \numcell{4.9} & \numcell{6.0} \\
Dream-7B-Instruct  & \numcell{38.7} & \numcell{32.3} & \numcell{/} & \numcell{/} & \numcell{72.7} & \numcell{57.8} & \numcell{10.7} & \numcell{4.7} & \numcell{10.7} & \numcell{4.9} \\
\hline
SDAR-4B-Chat       & \numcell{70.2} & \numcell{67.4} & \numcell{5.0} & \numcell{8.2} & \numcell{90.2} & \numcell{88.9} & \numcell{15.6} & \numcell{11.2} & \numcell{14.0} & \numcell{7.6} \\
\rowcolor[gray]{.9}
\textbf{TraDo-4B-Instruct} &
  \numcell{\textbf{75.6}}\posdelta{5.4} & \numcell{\textbf{71.8}}\posdelta{4.4} &
  \numcell{\textbf{8.3}}\posdelta{3.3} & \numcell{\textbf{10.3}}\posdelta{2.1} &
  \numcell{\textbf{91.2}}\posdelta{1.0} & \numcell{\textbf{90.3}}\posdelta{1.2} &
  \numcell{\textbf{18.7}}\posdelta{3.1} & \numcell{\textbf{15.1}}\posdelta{3.9} & \numcell{\textbf{12.9}} & \numcell{\textbf{10.4}}\posdelta{2.8} \\
SDAR-8B-Chat       & \numcell{74.3} & \numcell{70.7} & \numcell{11.8} & \numcell{8.3} & \numcell{91.1} & \numcell{90.4} & \numcell{18.5} & \numcell{15.3} & \numcell{11.5} & \numcell{11.2} \\
\rowcolor[gray]{.9}
\textbf{TraDo-8B-Instruct} & \numcell{\textbf{78.5}}\posdelta{4.2} & \numcell{\textbf{75.5}}\posdelta{4.8} & \numcell{\textbf{13.3}}\posdelta{1.5} & \numcell{\textbf{11.0}}\posdelta{2.7} & \numcell{\textbf{92.3}}\posdelta{1.2} & \numcell{\textbf{91.2}}\posdelta{0.8} & \numcell{\textbf{25.9}}\posdelta{7.4} & \numcell{\textbf{22.4}}\posdelta{7.1} & \numcell{\textbf{22.7}}\posdelta{11.2} & \numcell{\textbf{20.6}}\posdelta{9.4}  \\
\rowcolor[gray]{.9}
\textbf{TraDo-8B-Thinking} &
  \multicolumn{2}{c|}{\numcell{\textbf{87.4}}\posdelta{13.1}} &
  \multicolumn{2}{c|}{\numcell{\textbf{35.5}}\posdelta{23.7}} &
  \multicolumn{2}{c|}{\numcell{\textbf{94.2}}\posdelta{3.1}} &
  \multicolumn{2}{c|}{\numcell{\textbf{34.6}}\posdelta{16.1}} &
  \multicolumn{2}{c}{\numcell{\textbf{36.0}}\posdelta{23.8}} \\
\specialrule{1.2pt}{0pt}{0pt}
\end{tabular}
\end{adjustbox}
\end{table*}

\subsection{Instruction Models Trained with TraceRL}

We obtain TraDo-4B-Instruct and TraDo-8B-Instruct by applying TraceRL to math and coding tasks, starting from the SDAR base model.  We evaluate our models across math and coding tasks, as well as five reasoning datasets, and compare them against both strong diffusion language models and autoregressive models \citep{yang2024qwen2, dubey2024llama}. Our instruction models achieve state-of-the-art performance on reasoning tasks among current diffusion models, demonstrating the effectiveness of TraceRL. Both dynamic (faster) and static (more accurate) sampling abilities have been improved significantly. Notably, TraDo-4B-Instruct outperforms strong autoregressive baselines such as Qwen2.5-7B-Instruct across all math tasks.

\subsection{Long-CoT Diffusion Language Model}

The long-CoT diffusion language model TraDo-8B-Thinking is derived from TraDo-8B-Instruct, trained through a combination of long-CoT SFT and TraceRL. As the first long-CoT diffusion language model, TraDo-8B-Thinking demonstrates strong reasoning capabilities across benchmarks, notably achieving 85.8\% accuracy on MATH500, showing that diffusion language models can also excel at complex reasoning tasks.

\begin{figure}[t!]
  \centering
  \includegraphics[width=0.95\textwidth]
  {./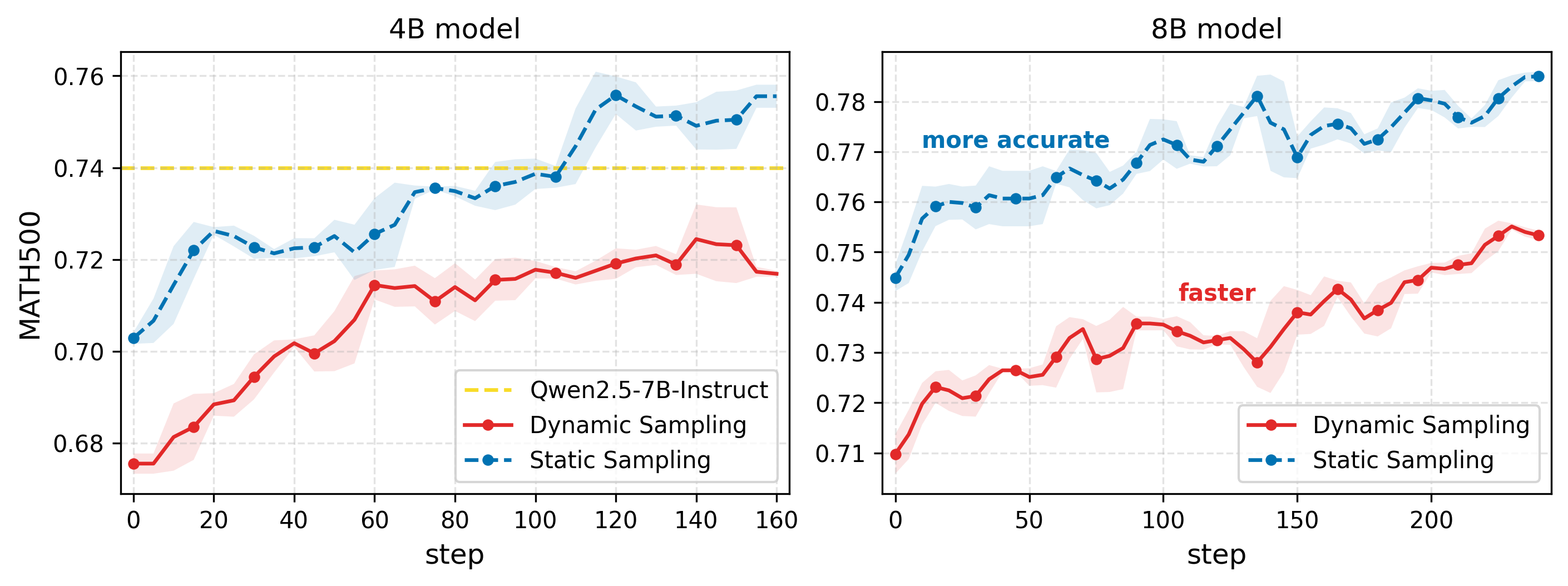}
  \caption{The TraceRL training curves for the 4B and 8B models on the math task. The red curve denotes the dynamic sampling accuracy, which achieves faster sampling speed, while the blue curve denotes the static sampling accuracy, which achieves higher accuracy. The 4B model is trained with a value model, whereas the 8B model is trained directly using $\mathcal{J}_{\text{policy}}$.}
  \label{4b8b}
\end{figure}

\subsection{Training Dynamics of TraceRL}

We record the training dynamics on math tasks for our 4B and 8B instruction models (see Figure~\ref{4b8b}). Although we adopt dynamic sampling during the RL training process, both dynamic and static accuracy improve steadily, and the trend suggests further potential for scaling. This RL training significantly enhances the models’ math reasoning ability: on MATH500, TraDo-4B-Instruct improves by 5.4\% (static) and 4.2\% (dynamic), surpassing Qwen2.5-7B-Instruct after optimization, while TraDo-8B-Instruct improves by 4.2\% (static) and 4.8\% (dynamic) (see Figure~\ref{4b8b}).

\begin{figure}[t!]
  \centering
  \includegraphics[width=0.95\textwidth]
  {./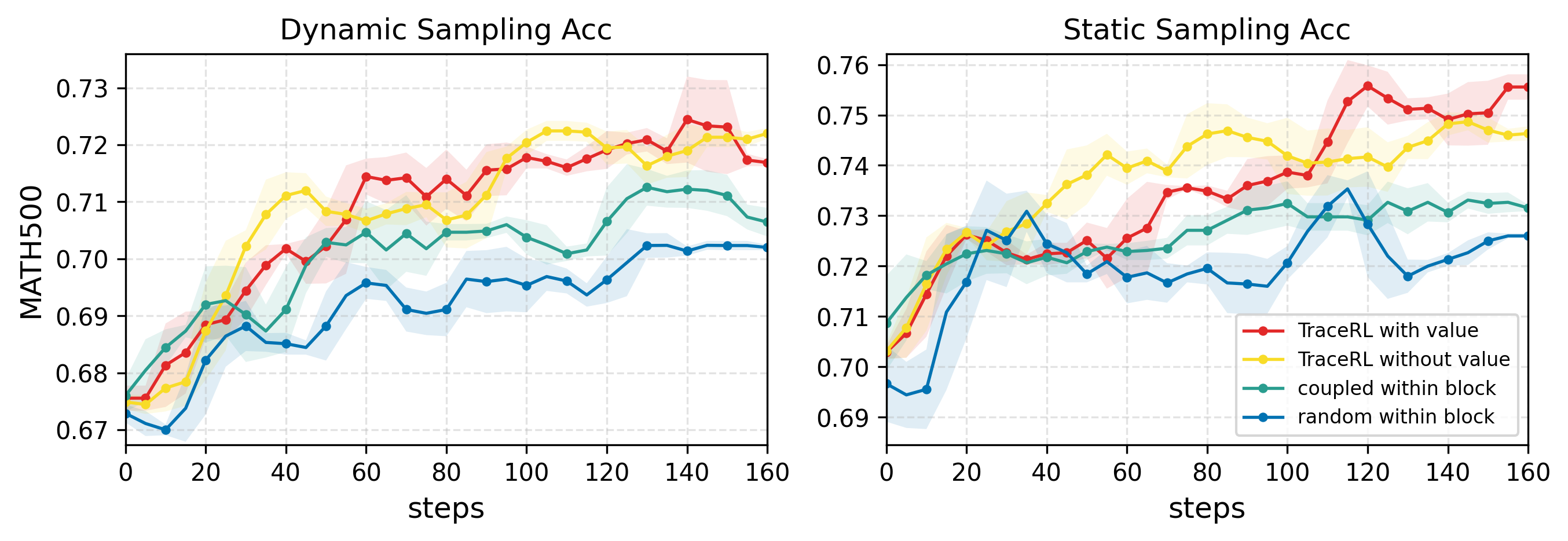}
  \caption{RL method ablations on block diffusion models and math RL tasks. The red and yellow curves represent TraceRL with and without a value model, respectively. The blue curve corresponds to training with a random masking objective within block, similar to the semi-autoregressive training approach. The green curve represents training with an additional complementary mask within block.}
  \label{rl_ablation_plot}
\end{figure}

\subsection{Stronger Optimization with TraceRL}

We compare TraceRL with existing RL methods, focusing first on block diffusion models. Although current RL methods are primarily developed for full attention models, we adapt them directly to the block setting. For the random masking approach \citep{yang2025mmada}, we restrict sampling within each block, making it resemble a semi-autoregressive method. For coupled RL \citep{gong2025diffucoder}, we introduce a complementary objective within each block, which provides more stable and effective training. We evaluate these methods on math tasks, with the results shown in Figure~\ref{rl_ablation_plot}. The experiments demonstrate that TraceRL achieves the best optimization performance, both with and without a value model. All RL methods use dynamic sampling during rollout, and TraceRL attains superior optimization under both dynamic and static evaluation settings. This demonstrates the importance of optimizing over the preferred trace, even within a small block.

\begin{figure}[t]
  \centering
  \begin{subfigure}{0.48\textwidth}
    \centering
    \includegraphics[width=\linewidth]{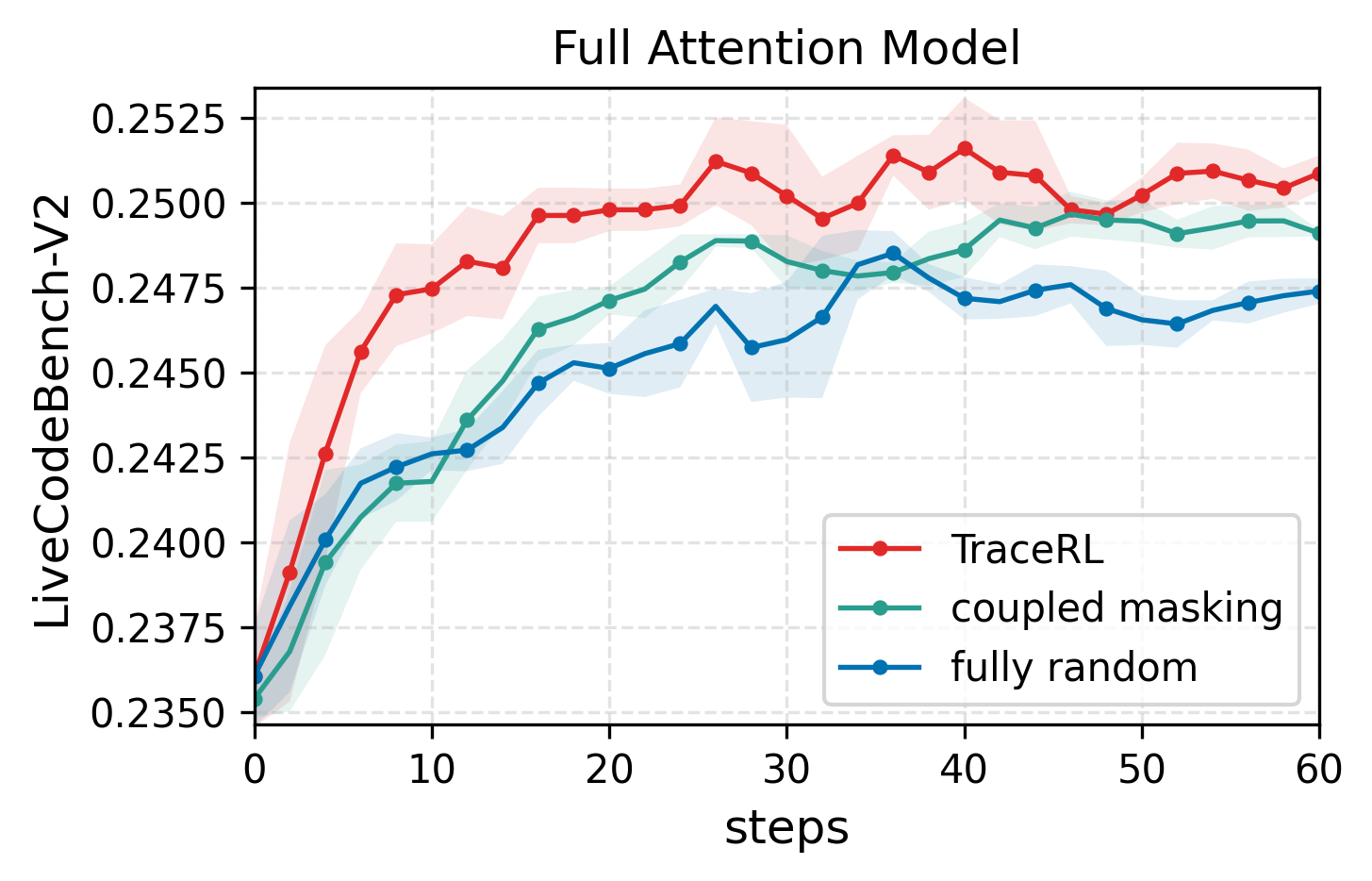}
  \end{subfigure}
  \begin{subfigure}{0.48\textwidth}
    \centering
    \includegraphics[width=\linewidth]{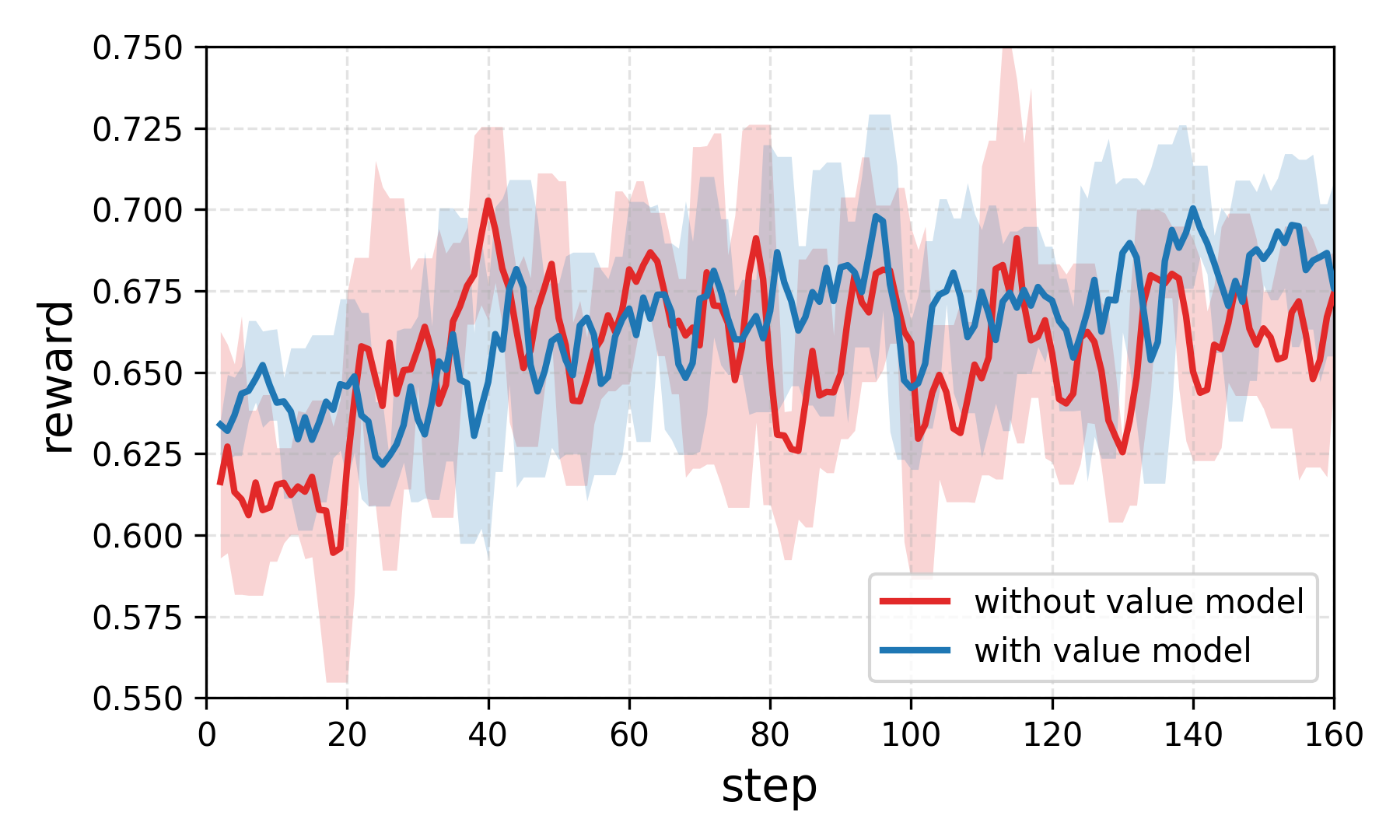}
  \end{subfigure}
  \vspace{-12pt}
  \caption{(a) RL training ablations for the full attention model Dream-7B-Coder-Instruct, focusing on coding tasks. (b) The comparison between using and not using a value model shows that incorporating a value model leads to fewer fluctuations during training. The experiments are all conducted on the 4B model with math tasks.}
  \label{fig:dyn_sta}
\end{figure}

\subsection{Also Fit for Full Attention Models and Coding Tasks}

To demonstrate the broad applicability of our method, we also experiment with a full attention model on coding RL tasks. We use Dream-7B-Coder-Instruct as the base model, fine-tuned on distillation data as a cold start before performing RL training (see details in Section~\ref{traindetailapp}). To accelerate training, we set the shrinkage parameter to $s = 8$. Similar to Table~\ref{tracesft}, we augment the training data for the two baseline methods (Section~\ref{traindetailapp}). We find that TraceRL converges faster and achieves the best performance compared with the baselines (see Figure~\ref{fig:dyn_sta} (a)). Notably, after training, we achieve 25.0\% accuracy on LiveCodeBench-V2, setting a new state of the art among open-source full-attention DLMs.

\subsection{Diffusion Value Model for Reducing Training Variance}

Training a policy model coupled with a value model can provide a variance-reducing baseline in LLM RL training \citep{schulman2017proximal, hu2025open}. We extend this idea to diffusion language models. In our diffusion-based value model, we assign the verifiable reward of the entire trajectory to the tokens in the final step, while leaving the reward for all preceding tokens as zero. We then use our proposed modeling to estimate values and advantages, jointly optimizing the policy and value models. This approach leads to reduced variance and fluctuations in the training process (see Figure~\ref{fig:dyn_sta}(b)) for the 4B model on math tasks.

\begin{table}[t]
\centering
\caption{Adapting block size $B = 4$ to $8$ on reasoning tasks through TraceRL. The reported values are accuracies of these baselines under dynamic sampling with threshold $0.9$.}
\label{tab:largerb}
\begin{tabular}{lcccc}
\toprule
\textbf{Model} & \textbf{inference block size} & \textbf{MATH500} & \textbf{GSM8K} & \textbf{LiveCodeBench} \\
\midrule
\multirow{2}{*}{Base model with $B = 4$} & 4 & 67.4 & 88.9 & 11.2 \\
                              & 8 & 60.2 & 83.0 & 9.8 \\
Enlarge B to 8 by TraceRL               & 8 & 67.7 & 88.7 & 10.8 \\
\bottomrule
\end{tabular}
\end{table}

\subsection{Scaling Block Size with TraceRL}

A block size of 4 can limit inference flexibility, and some inference acceleration methods depend on larger block sizes \citep{hong2025wide, song2025seed, wang2025diffusion}. Therefore, we explore the potential of using TraceRL to increase the block size. We first perform rollout with a block size of $B = 4$, then apply TraceRL with $B = 8$. After 60 steps, we switch the rollout to $B = 8$ and continue optimizing with $B = 8$ for 40 steps, resulting in a model that is familiar with and better adapted to larger block sizes. As shown in Table~\ref{tab:largerb}, this approach proves effective across different tasks, even though it is trained only on math tasks.

\subsection{Analysis on Acceleration Ratio and Response Length}

From Table~\ref{tab:acceffect}, we observe that TraceRL optimization accelerates dynamic sampling, achieving a $15.4\%$ speedup on MATH500. A straightforward explanation is that the model becomes more confident when encountering problems in the domain it has been optimized for, which in turn allows each step of dynamic sampling more likely to unmask more tokens under the same threshold. We also find that the average response length on complex math reasoning tasks increases, providing further evidence of improved reasoning ability.

\begin{table*}[t]
\centering
\caption{Acceleration ratio and Response Length Analysis. ``Acceleration" is defined as the ratio of response length to the total sampling steps for a task, and we report the average ratio across the tested dataset. We use dynamic sampling with threshold 0.9 here.}\label{tab:acceffect}
\begin{adjustbox}{max width=\textwidth} 
\begin{tabular}{l|ccc|ccc}
\toprule
\multirow{2}{*}{\textbf{Model}} &
  \multicolumn{3}{c|}{\textbf{MATH500}} &
  \multicolumn{3}{c}{\textbf{LiveBench}} \\
& accelerated & total step & avg len.  & accelerated & total step & avg len. \\
\midrule
SDAR-4B-Chat        & 2.28 & 240 & 548 tok. &  1.56 & 119 & 181 tok. \\
TraDo-4B-Instruct   & 2.63 & 229 & 595 tok. & 1.61  & 154 & 238 tok.  \\
SDAR-8B-Chat  & 2.42 & 228 & 557 tok.  & 1.56  & 161 & 256 tok.  \\
TraDo-8B-Instruct   & 2.50 & 240 & 625 tok. &   1.60 & 151 & 233 tok.  \\
\bottomrule
\end{tabular}
\end{adjustbox}
\end{table*}

\section{Open-source Framework}

    We implement an open-source framework for training and deploying large diffusion language models, released at \href{https://github.com/Gen-Verse/dLLM-RL}{https://github.com/Gen-Verse/dLLM-RL}. Its main features are summarized as follows.

\subsection{Support for Different Model Structures}

Diffusion language models exhibit diverse architectures (see Section~\ref{prelim}). Our framework is comprehensive and supports major classes of large diffusion models, including pretrained full-attention models (e.g., the LLaDA series \citep{nie2025large, zhu2025llada} and MMaDA \citep{yang2025mmada}), adapted full-attention models (e.g., the Dream series \citep{dream2025, dreamcoder2025} and DiffuCoder \citep{gong2025diffucoder}), and block diffusion models (e.g., SDAR \citep{JetAstra2025} and our TraDo series).

\subsection{Accelerated Inference}

To improve sampling efficiency while avoiding performance degradation, multiple KV-cache techniques have been proposed for full-attention diffusion language models \citep{hu2025accelerating, liu2025dllm, ma2025dkv, arriola2025block, yu2025dimple, wu2025fast}. We extend Fast-DLLM \citep{wu2025fast} by adding a tunable window-size further horizon for each forward pass (see Section~\ref{appeval}). For block diffusion models, we employ JetEngine \citep{JetAstra2025} to accelerate inference. These acceleration techniques are applied in both inference and RL sampling steps.

\subsection{Diverse Post-training Method Implementations}

For supervised fine-tuning, we support fully random masking and semi-AR methods for full-attention models, as well as efficient semi-AR fine-tuning for block-attention models. Our framework includes TraceRL, coupled RL \citep{gong2025diffucoder}, and random masking RL \citep{yang2025mmada}, with accelerated inference steps. For block diffusion models, we also provide the option to use a diffusion-based value model. All training methods support multi-node training.

\section{Related Work}

\subsection{Enhancing Reasoning Capabilities of Large Language Models}

Approaches have been proposed to boost the reasoning abilities of large language models (LLMs), spanning Chain-of-Thought (CoT) prompting \citep{wei2022chain,yang2024buffer}—which elicits step-by-step rationales—and self-improvement optimization algorithms \citep{zelikman2024star,hosseini2024v,yang2024supercorrect}, in which the model iteratively refines its own answers through verification, reflection, or voting. Incorporating Long-CoT methods, such as self-checking and self-reflection patterns, significantly enhances the reasoning capabilities of LLMs. Reinforcement learning (RL) has been found to be able to incentivize the reasoning ability of LLMs \citep{guo2025deepseek,k1.5,openr1,hu2025open,yang2025reasonflux} and to improve their ability to solve complex tasks \citep{jiang2025deepretrieval, jin2025search, yang2024acecode, wang2025co, zou2025reasonflux}.


Despite the notable reasoning power of Long-CoT models, their inference speed remains prohibitively slow, highlighting the necessity and potential for faster inference methods \citep{sui2025stop}. Diffusion language models, with their parallel decoding capabilities and bidirectional mechanisms, present a promising direction to overcome these limitations and achieve more efficient reasoning inference.

\subsection{RL for Diffusion Language Model}

Reinforcement learning methods for full-attention models have been studied in prior work \citep{yang2025mmada,gong2025diffucoder,zhao2025d1}. In MMaDA, \citet{yang2025mmada} introduce random masking for each rollout sample and optimize a PPO objective to perform RL training. \citet{gong2025diffucoder} propose augmenting each rollout $x_0$ with not only a single noise sample $x_t$ but also a complementary counterpart $\widehat{x}_t$, thereby doubling the effective training data. They provide a theoretical analysis showing that this strategy reduces variance, and empirical results confirm faster and more stable optimization compared to random-masking RL. Nevertheless, these approaches overlook information contained in the sampling trajectory.

\subsection{Diffusion Models for Language Modeling}

Extending the iterative-refinement paradigm of diffusion language models \citep{ho2020denoising,song2019generative} to discrete language data is not straightforward. One line of research projects tokens into a continuous latent space where standard Gaussian diffusion can operate \citep{dieleman2022continuous, graves2023bayesian, gulrajani2023likelihood}.
A parallel thread directly defines the forward noising process on the token simplex via state-transition matrices. Structured Diffusion LM \citep{austin2021structured} introduced a categorical forward kernel (e.g.\ uniform corruption or bit-flip), and trained a transformer to invert it. Subsequent work simplified or re-parameterised the transition matrix to reduce variance and improve stability \citep{sahoo2024simple,shi2024simplified,ou2024your}. 

Masked Diffusion Models (MDMs) have emerged as a scalable architecture for large-scale diffusion language models \citep{nie2025large,dream2025,yang2025mmada}. MDMs naturally leverage bidirectional attention, yielding stronger global consistency than left-to-right autoregressive transformers \citep{ye2024beyond,zhang2023planner}, while also supporting parallel decoding, which enables substantial inference acceleration \citep{labs2025mercury,google_deepmind_gemini_diffusion_2025}.

\section{Conclusion}

We present a new reinforcement learning method for diffusion language models with diverse architectures. Through extensive experiments and evaluations, we demonstrate the effectiveness of this method across different RL tasks, resulting in three state-of-the-art diffusion language models. We also highlight its benefits for accelerating inference and scaling block size, which suggest promising directions for further exploration.

In particular, integrating the accelerated inference strengths of diffusion models with their potential for strong reasoning ability represents an exciting research avenue. While current long-CoT LLMs achieve strong performance on complex tasks, they suffer from extremely long reasoning times. Such integration could unlock new opportunities for deploying complex reasoning tasks efficiently at scale.
The proposed diffusion value model can incorporate process rewards, providing stronger supervision than a single verifiable reward \citep{lightman2023let}. Further exploration of process reward for TraceRL optimization remains an important direction for future work.

To accelerate progress in this area, facilitate reproducible research, and support practical applications, we release a comprehensive framework for building, training, and deploying diffusion language models with diverse architectures. This framework integrates accelerated inference techniques and incorporates them into its reinforcement learning pipelines. It supports implementations of various reinforcement learning and supervised fine-tuning methods.

\clearpage

\bibliography{main}

\begin{thebibliography}{62}
\providecommand{\natexlab}[1]{#1}
\providecommand{\url}[1]{\texttt{#1}}
\expandafter\ifx\csname urlstyle\endcsname\relax
  \providecommand{\doi}[1]{doi: #1}\else
  \providecommand{\doi}{doi: \begingroup \urlstyle{rm}\Url}\fi

\bibitem[Arriola et~al.(2025)Arriola, Gokaslan, Chiu, Yang, Qi, Han, Sahoo, and Kuleshov]{arriola2025block}
M.~Arriola, A.~Gokaslan, J.~T. Chiu, Z.~Yang, Z.~Qi, J.~Han, S.~S. Sahoo, and V.~Kuleshov.
\newblock Block diffusion: Interpolating between autoregressive and diffusion language models.
\newblock \emph{arXiv preprint arXiv:2503.09573}, 2025.

\bibitem[Austin et~al.(2021)Austin, Johnson, Ho, Tarlow, and Van Den~Berg]{austin2021structured}
J.~Austin, D.~D. Johnson, J.~Ho, D.~Tarlow, and R.~Van Den~Berg.
\newblock Structured denoising diffusion models in discrete state-spaces.
\newblock \emph{Advances in neural information processing systems}, 34:\penalty0 17981--17993, 2021.

\bibitem[Chang et~al.(2022)Chang, Zhang, Jiang, Liu, and Freeman]{chang2022maskgit}
H.~Chang, H.~Zhang, L.~Jiang, C.~Liu, and W.~T. Freeman.
\newblock Maskgit: Masked generative image transformer.
\newblock In \emph{Proceedings of the IEEE/CVF conference on computer vision and pattern recognition}, pages 11315--11325, 2022.

\bibitem[Cheng et~al.(2025)Cheng, Bian, Liu, Jiang, Liu, Zhang, Wang, Guo, Chen, Qi*, and Zhou]{JetAstra2025}
S.~Cheng, Y.~Bian, D.~Liu, Y.~Jiang, Y.~Liu, L.~Zhang, W.~Wang, Q.~Guo, K.~Chen, B.~Qi*, and B.~Zhou.
\newblock Sdar: A synergistic diffusion–autoregression paradigm for scalable sequence generation, 2025.
\newblock URL \url{https://github.com/JetAstra/SDAR}.

\bibitem[Cobbe et~al.(2021)Cobbe, Kosaraju, Bavarian, Chen, Jun, Kaiser, Plappert, Tworek, Hilton, Nakano, et~al.]{cobbe2021training}
K.~Cobbe, V.~Kosaraju, M.~Bavarian, M.~Chen, H.~Jun, L.~Kaiser, M.~Plappert, J.~Tworek, J.~Hilton, R.~Nakano, et~al.
\newblock Training verifiers to solve math word problems.
\newblock \emph{arXiv preprint arXiv:2110.14168}, 2021.

\bibitem[Dieleman et~al.(2022)Dieleman, Sartran, Roshannai, Savinov, Ganin, Richemond, Doucet, Strudel, Dyer, Durkan, et~al.]{dieleman2022continuous}
S.~Dieleman, L.~Sartran, A.~Roshannai, N.~Savinov, Y.~Ganin, P.~H. Richemond, A.~Doucet, R.~Strudel, C.~Dyer, C.~Durkan, et~al.
\newblock Continuous diffusion for categorical data.
\newblock \emph{arXiv preprint arXiv:2211.15089}, 2022.

\bibitem[Dubey et~al.(2024)Dubey, Jauhri, Pandey, Kadian, Al-Dahle, Letman, Mathur, Schelten, Yang, Fan, et~al.]{dubey2024llama}
A.~Dubey, A.~Jauhri, A.~Pandey, A.~Kadian, A.~Al-Dahle, A.~Letman, A.~Mathur, A.~Schelten, A.~Yang, A.~Fan, et~al.
\newblock The llama 3 herd of models.
\newblock \emph{arXiv e-prints}, pages arXiv--2407, 2024.

\bibitem[Gong et~al.(2025)Gong, Zhang, Zheng, Gu, Jaitly, Kong, and Zhang]{gong2025diffucoder}
S.~Gong, R.~Zhang, H.~Zheng, J.~Gu, N.~Jaitly, L.~Kong, and Y.~Zhang.
\newblock Diffucoder: Understanding and improving masked diffusion models for code generation.
\newblock \emph{arXiv preprint arXiv:2506.20639}, 2025.

\bibitem[{Google DeepMind}(2025)]{google_deepmind_gemini_diffusion_2025}
{Google DeepMind}.
\newblock Gemini diffusion.
\newblock \url{https://blog.google/technology/google-deepmind/gemini-diffusion/}, 2025.
\newblock Accessed: 2024-07-24.

\bibitem[Graves et~al.(2023)Graves, Srivastava, Atkinson, and Gomez]{graves2023bayesian}
A.~Graves, R.~K. Srivastava, T.~Atkinson, and F.~Gomez.
\newblock Bayesian flow networks.
\newblock \emph{arXiv preprint arXiv:2308.07037}, 2023.

\bibitem[Gulrajani and Hashimoto(2023)]{gulrajani2023likelihood}
I.~Gulrajani and T.~B. Hashimoto.
\newblock Likelihood-based diffusion language models.
\newblock \emph{Advances in Neural Information Processing Systems}, 36:\penalty0 16693--16715, 2023.

\bibitem[Guo et~al.(2025)Guo, Yang, Zhang, Song, Zhang, Xu, Zhu, Ma, Wang, Bi, et~al.]{guo2025deepseek}
D.~Guo, D.~Yang, H.~Zhang, J.~Song, R.~Zhang, R.~Xu, Q.~Zhu, S.~Ma, P.~Wang, X.~Bi, et~al.
\newblock Deepseek-r1: Incentivizing reasoning capability in llms via reinforcement learning.
\newblock \emph{arXiv preprint arXiv:2501.12948}, 2025.

\bibitem[Hendrycks et~al.(2021)Hendrycks, Burns, Kadavath, Arora, Basart, Tang, Song, and Steinhardt]{hendrycks2021measuring}
D.~Hendrycks, C.~Burns, S.~Kadavath, A.~Arora, S.~Basart, E.~Tang, D.~Song, and J.~Steinhardt.
\newblock Measuring mathematical problem solving with the math dataset.
\newblock \emph{arXiv preprint arXiv:2103.03874}, 2021.

\bibitem[Ho et~al.(2020)Ho, Jain, and Abbeel]{ho2020denoising}
J.~Ho, A.~Jain, and P.~Abbeel.
\newblock Denoising diffusion probabilistic models.
\newblock \emph{Advances in neural information processing systems}, 33:\penalty0 6840--6851, 2020.

\bibitem[Hong et~al.(2025)Hong, Yu, Ye, Huang, Zheng, Zhang, Wang, and Yao]{hong2025wide}
F.~Hong, G.~Yu, Y.~Ye, H.~Huang, H.~Zheng, Y.~Zhang, Y.~Wang, and J.~Yao.
\newblock Wide-in, narrow-out: Revokable decoding for efficient and effective dllms.
\newblock \emph{arXiv preprint arXiv:2507.18578}, 2025.

\bibitem[Hosseini et~al.(2024)Hosseini, Yuan, Malkin, Courville, Sordoni, and Agarwal]{hosseini2024v}
A.~Hosseini, X.~Yuan, N.~Malkin, A.~Courville, A.~Sordoni, and R.~Agarwal.
\newblock V-star: Training verifiers for self-taught reasoners.
\newblock \emph{arXiv preprint arXiv:2402.06457}, 2024.

\bibitem[Hu et~al.(2025{\natexlab{a}})Hu, Zhang, Han, Jiang, Zhang, and Shum]{hu2025open}
J.~Hu, Y.~Zhang, Q.~Han, D.~Jiang, X.~Zhang, and H.-Y. Shum.
\newblock Open-reasoner-zero: An open source approach to scaling up reinforcement learning on the base model.
\newblock \emph{arXiv preprint arXiv:2503.24290}, 2025{\natexlab{a}}.

\bibitem[Hu et~al.(2025{\natexlab{b}})Hu, Meng, Akhauri, Abdelfattah, Seo, Zhang, and Gupta]{hu2025accelerating}
Z.~Hu, J.~Meng, Y.~Akhauri, M.~S. Abdelfattah, J.-s. Seo, Z.~Zhang, and U.~Gupta.
\newblock Accelerating diffusion language model inference via efficient kv caching and guided diffusion.
\newblock \emph{arXiv preprint arXiv:2505.21467}, 2025{\natexlab{b}}.

\bibitem[{Hugging Face}(2025)]{openr1}
{Hugging Face}.
\newblock Open r1: A fully open reproduction of deepseek-r1, January 2025.
\newblock URL \url{https://github.com/huggingface/open-r1}.

\bibitem[Jaghouar et~al.(2024)Jaghouar, Ong, Basra, Obeid, Straube, Keiblinger, Bakouch, Atkins, Panahi, Goddard, et~al.]{jaghouar2024intellect}
S.~Jaghouar, J.~M. Ong, M.~Basra, F.~Obeid, J.~Straube, M.~Keiblinger, E.~Bakouch, L.~Atkins, M.~Panahi, C.~Goddard, et~al.
\newblock Intellect-1 technical report.
\newblock \emph{arXiv preprint arXiv:2412.01152}, 2024.

\bibitem[Jain et~al.(2024)Jain, Han, Gu, Li, Yan, Zhang, Wang, Solar-Lezama, Sen, and Stoica]{jain2024livecodebench}
N.~Jain, K.~Han, A.~Gu, W.-D. Li, F.~Yan, T.~Zhang, S.~Wang, A.~Solar-Lezama, K.~Sen, and I.~Stoica.
\newblock Livecodebench: Holistic and contamination free evaluation of large language models for code.
\newblock \emph{arXiv preprint arXiv:2403.07974}, 2024.

\bibitem[Jiang et~al.(2025)Jiang, Lin, Cao, Tian, Kang, Wang, Sun, and Han]{jiang2025deepretrieval}
P.~Jiang, J.~Lin, L.~Cao, R.~Tian, S.~Kang, Z.~Wang, J.~Sun, and J.~Han.
\newblock Deepretrieval: Hacking real search engines and retrievers with large language models via reinforcement learning.
\newblock \emph{arXiv preprint arXiv:2503.00223}, 2025.

\bibitem[Jin et~al.(2025)Jin, Zeng, Yue, Yoon, Arik, Wang, Zamani, and Han]{jin2025search}
B.~Jin, H.~Zeng, Z.~Yue, J.~Yoon, S.~Arik, D.~Wang, H.~Zamani, and J.~Han.
\newblock Search-r1: Training llms to reason and leverage search engines with reinforcement learning.
\newblock \emph{arXiv preprint arXiv:2503.09516}, 2025.

\bibitem[Kim et~al.(2025)Kim, Shah, Kontonis, Kakade, and Chen]{kim2025train}
J.~Kim, K.~Shah, V.~Kontonis, S.~Kakade, and S.~Chen.
\newblock Train for the worst, plan for the best: Understanding token ordering in masked diffusions.
\newblock \emph{arXiv preprint arXiv:2502.06768}, 2025.

\bibitem[Labs et~al.(2025)Labs, Khanna, Kharbanda, Li, Varma, Wang, Birnbaum, Luo, Miraoui, Palrecha, et~al.]{labs2025mercury}
I.~Labs, S.~Khanna, S.~Kharbanda, S.~Li, H.~Varma, E.~Wang, S.~Birnbaum, Z.~Luo, Y.~Miraoui, A.~Palrecha, et~al.
\newblock Mercury: Ultra-fast language models based on diffusion.
\newblock \emph{arXiv preprint arXiv:2506.17298}, 2025.

\bibitem[Li et~al.(2022)Li, Choi, Chung, Kushman, Schrittwieser, Leblond, Eccles, Keeling, Gimeno, Lago, Hubert, Choy, de~Masson~d’Autume, Babuschkin, Chen, Huang, Welbl, Gowal, Cherepanov, Molloy, Mankowitz, Robson, Kohli, de~Freitas, Kavukcuoglu, and Vinyals]{li2022alphacode}
Y.~Li, D.~Choi, J.~Chung, N.~Kushman, J.~Schrittwieser, R.~Leblond, T.~Eccles, J.~Keeling, F.~Gimeno, A.~D. Lago, T.~Hubert, P.~Choy, C.~de~Masson~d’Autume, I.~Babuschkin, X.~Chen, P.-S. Huang, J.~Welbl, S.~Gowal, A.~Cherepanov, J.~Molloy, D.~J. Mankowitz, E.~S. Robson, P.~Kohli, N.~de~Freitas, K.~Kavukcuoglu, and O.~Vinyals.
\newblock Competition-level code generation with alphacode.
\newblock \emph{Science}, 378\penalty0 (6624):\penalty0 1092--1097, 2022.
\newblock \doi{10.1126/science.abq1158}.
\newblock URL \url{https://www.science.org/doi/10.1126/science.abq1158}.

\bibitem[Lightman et~al.(2023)Lightman, Kosaraju, Burda, Edwards, Baker, Lee, Leike, Schulman, Sutskever, and Cobbe]{lightman2023let}
H.~Lightman, V.~Kosaraju, Y.~Burda, H.~Edwards, B.~Baker, T.~Lee, J.~Leike, J.~Schulman, I.~Sutskever, and K.~Cobbe.
\newblock Let's verify step by step.
\newblock In \emph{The Twelfth International Conference on Learning Representations}, 2023.

\bibitem[Liu et~al.(2025)Liu, Yang, Zhang, Chen, Zou, Wei, Wang, and Zhang]{liu2025dllm}
Z.~Liu, Y.~Yang, Y.~Zhang, J.~Chen, C.~Zou, Q.~Wei, S.~Wang, and L.~Zhang.
\newblock dllm-cache: Accelerating diffusion large language models with adaptive caching.
\newblock \emph{arXiv preprint arXiv:2506.06295}, 2025.

\bibitem[Luo et~al.(2025)Luo, Tan, Huang, Patel, Ariyak, Wu, Shi, Xin, Cai, Weber, Zhang, Li, Popa, and Stoica]{deepcoder2025}
M.~Luo, S.~Tan, R.~Huang, A.~Patel, A.~Ariyak, Q.~Wu, X.~Shi, R.~Xin, C.~Cai, M.~Weber, C.~Zhang, L.~E. Li, R.~A. Popa, and I.~Stoica.
\newblock Deepcoder: A fully open-source 14b coder at o3-mini level.
\newblock \url{https://pretty-radio-b75.notion.site/DeepCoder-A-Fully-Open-Source-14B-Coder-at-O3-mini-Level-1cf81902c14680b3bee5eb349a512a51}, 2025.
\newblock Notion Blog.

\bibitem[Ma et~al.(2025)Ma, Yu, Fang, and Wang]{ma2025dkv}
X.~Ma, R.~Yu, G.~Fang, and X.~Wang.
\newblock dkv-cache: The cache for diffusion language models.
\newblock \emph{arXiv preprint arXiv:2505.15781}, 2025.

\bibitem[{Mathematical Association of America, American Mathematics Competitions}(2024)]{AIME2024}
{Mathematical Association of America, American Mathematics Competitions}.
\newblock American invitational mathematics examination (aime) 2024: Aime i and aime ii.
\newblock \url{https://artofproblemsolving.com/wiki/index.php/AIME_Problems_and_Solutions}, 2024.
\newblock Competition problems used as an evaluation dataset; original problems by MAA AMC.

\bibitem[Nie et~al.(2025)Nie, Zhu, You, Zhang, Ou, Hu, Zhou, Lin, Wen, and Li]{nie2025large}
S.~Nie, F.~Zhu, Z.~You, X.~Zhang, J.~Ou, J.~Hu, J.~Zhou, Y.~Lin, J.-R. Wen, and C.~Li.
\newblock Large language diffusion models.
\newblock \emph{arXiv preprint arXiv:2502.09992}, 2025.

\bibitem[Ou et~al.(2024)Ou, Nie, Xue, Zhu, Sun, Li, and Li]{ou2024your}
J.~Ou, S.~Nie, K.~Xue, F.~Zhu, J.~Sun, Z.~Li, and C.~Li.
\newblock Your absorbing discrete diffusion secretly models the conditional distributions of clean data.
\newblock \emph{arXiv preprint arXiv:2406.03736}, 2024.

\bibitem[Sahoo et~al.(2024)Sahoo, Arriola, Schiff, Gokaslan, Marroquin, Chiu, Rush, and Kuleshov]{sahoo2024simple}
S.~Sahoo, M.~Arriola, Y.~Schiff, A.~Gokaslan, E.~Marroquin, J.~Chiu, A.~Rush, and V.~Kuleshov.
\newblock Simple and effective masked diffusion language models.
\newblock \emph{Advances in Neural Information Processing Systems}, 37:\penalty0 130136--130184, 2024.

\bibitem[Schulman et~al.(2017)Schulman, Wolski, Dhariwal, Radford, and Klimov]{schulman2017proximal}
J.~Schulman, F.~Wolski, P.~Dhariwal, A.~Radford, and O.~Klimov.
\newblock Proximal policy optimization algorithms.
\newblock \emph{arXiv preprint arXiv:1707.06347}, 2017.

\bibitem[Shi et~al.(2024)Shi, Han, Wang, Doucet, and Titsias]{shi2024simplified}
J.~Shi, K.~Han, Z.~Wang, A.~Doucet, and M.~Titsias.
\newblock Simplified and generalized masked diffusion for discrete data.
\newblock \emph{Advances in neural information processing systems}, 37:\penalty0 103131--103167, 2024.

\bibitem[Song and Ermon(2019)]{song2019generative}
Y.~Song and S.~Ermon.
\newblock Generative modeling by estimating gradients of the data distribution.
\newblock \emph{Advances in neural information processing systems}, 32, 2019.

\bibitem[Song et~al.(2025)Song, Zhang, Luo, Gao, Xia, Luo, Li, Yang, Yu, Qu, et~al.]{song2025seed}
Y.~Song, Z.~Zhang, C.~Luo, P.~Gao, F.~Xia, H.~Luo, Z.~Li, Y.~Yang, H.~Yu, X.~Qu, et~al.
\newblock Seed diffusion: A large-scale diffusion language model with high-speed inference.
\newblock \emph{arXiv preprint arXiv:2508.02193}, 2025.

\bibitem[Sui et~al.(2025)Sui, Chuang, Wang, Zhang, Zhang, Yuan, Liu, Wen, Zhong, Chen, et~al.]{sui2025stop}
Y.~Sui, Y.-N. Chuang, G.~Wang, J.~Zhang, T.~Zhang, J.~Yuan, H.~Liu, A.~Wen, S.~Zhong, H.~Chen, et~al.
\newblock Stop overthinking: A survey on efficient reasoning for large language models.
\newblock \emph{arXiv preprint arXiv:2503.16419}, 2025.

\bibitem[Team et~al.(2025)Team, Du, Gao, Xing, Jiang, Chen, Li, Xiao, Du, Liao, et~al.]{k1.5}
K.~Team, A.~Du, B.~Gao, B.~Xing, C.~Jiang, C.~Chen, C.~Li, C.~Xiao, C.~Du, C.~Liao, et~al.
\newblock Kimi k1. 5: Scaling reinforcement learning with llms.
\newblock \emph{arXiv preprint arXiv:2501.12599}, 2025.

\bibitem[Wang et~al.(2025{\natexlab{a}})Wang, Xu, Jin, Jin, Zhang, and Deng]{wang2025diffusion}
X.~Wang, C.~Xu, Y.~Jin, J.~Jin, H.~Zhang, and Z.~Deng.
\newblock Diffusion llms can do faster-than-ar inference via discrete diffusion forcing.
\newblock \emph{arXiv preprint arXiv:2508.09192}, 2025{\natexlab{a}}.

\bibitem[Wang et~al.(2025{\natexlab{b}})Wang, Yang, Tian, Shen, and Wang]{wang2025co}
Y.~Wang, L.~Yang, Y.~Tian, K.~Shen, and M.~Wang.
\newblock Co-evolving llm coder and unit tester via reinforcement learning.
\newblock \emph{arXiv preprint arXiv:2506.03136}, 2025{\natexlab{b}}.

\bibitem[Wei et~al.(2022)Wei, Wang, Schuurmans, Bosma, Xia, Chi, Le, Zhou, et~al.]{wei2022chain}
J.~Wei, X.~Wang, D.~Schuurmans, M.~Bosma, F.~Xia, E.~Chi, Q.~V. Le, D.~Zhou, et~al.
\newblock Chain-of-thought prompting elicits reasoning in large language models.
\newblock \emph{Advances in neural information processing systems}, 35:\penalty0 24824--24837, 2022.

\bibitem[White et~al.(2024)White, Dooley, Roberts, Pal, Feuer, Jain, Shwartz-Ziv, Jain, Saifullah, Naidu, et~al.]{white2024livebench}
C.~White, S.~Dooley, M.~Roberts, A.~Pal, B.~Feuer, S.~Jain, R.~Shwartz-Ziv, N.~Jain, K.~Saifullah, S.~Naidu, et~al.
\newblock Livebench: A challenging, contamination-free llm benchmark.
\newblock \emph{arXiv preprint arXiv:2406.19314}, 2024.

\bibitem[Wu et~al.(2025)Wu, Zhang, Xue, Liu, Diao, Zhu, Luo, Han, and Xie]{wu2025fast}
C.~Wu, H.~Zhang, S.~Xue, Z.~Liu, S.~Diao, L.~Zhu, P.~Luo, S.~Han, and E.~Xie.
\newblock Fast-dllm: Training-free acceleration of diffusion llm by enabling kv cache and parallel decoding.
\newblock \emph{arXiv preprint arXiv:2505.22618}, 2025.

\bibitem[Xie et~al.(2025)Xie, Ye, Zheng, Gao, Dong, Wu, Zhao, Gong, Jiang, Li, and Kong]{dreamcoder2025}
Z.~Xie, J.~Ye, L.~Zheng, J.~Gao, J.~Dong, Z.~Wu, X.~Zhao, S.~Gong, X.~Jiang, Z.~Li, and L.~Kong.
\newblock Dream-coder 7b, 2025.
\newblock URL \url{https://hkunlp.github.io/blog/2025/dream-coder}.

\bibitem[Yang et~al.(2024{\natexlab{a}})Yang, Yang, Zhang, Hui, Zheng, Yu, Li, Liu, Huang, Wei, et~al.]{yang2024qwen2}
A.~Yang, B.~Yang, B.~Zhang, B.~Hui, B.~Zheng, B.~Yu, C.~Li, D.~Liu, F.~Huang, H.~Wei, et~al.
\newblock Qwen2. 5 technical report.
\newblock \emph{arXiv preprint arXiv:2412.15115}, 2024{\natexlab{a}}.

\bibitem[Yang et~al.(2024{\natexlab{b}})Yang, Kang, Shi, and Lo]{yang2024acecode}
C.~Yang, H.~J. Kang, J.~Shi, and D.~Lo.
\newblock Acecode: A reinforcement learning framework for aligning code efficiency and correctness in code language models.
\newblock \emph{arXiv preprint arXiv:2412.17264}, 2024{\natexlab{b}}.

\bibitem[Yang et~al.(2024{\natexlab{c}})Yang, Yu, Zhang, Cao, Xu, Zhang, Gonzalez, and Cui]{yang2024buffer}
L.~Yang, Z.~Yu, T.~Zhang, S.~Cao, M.~Xu, W.~Zhang, J.~E. Gonzalez, and B.~Cui.
\newblock Buffer of thoughts: Thought-augmented reasoning with large language models.
\newblock \emph{Advances in Neural Information Processing Systems}, 37:\penalty0 113519--113544, 2024{\natexlab{c}}.

\bibitem[Yang et~al.(2024{\natexlab{d}})Yang, Yu, Zhang, Xu, Gonzalez, Cui, and Yan]{yang2024supercorrect}
L.~Yang, Z.~Yu, T.~Zhang, M.~Xu, J.~E. Gonzalez, B.~Cui, and S.~Yan.
\newblock Supercorrect: Advancing small llm reasoning with thought template distillation and self-correction.
\newblock \emph{arXiv preprint arXiv:2410.09008}, 2024{\natexlab{d}}.

\bibitem[Yang et~al.(2025{\natexlab{a}})Yang, Tian, Li, Zhang, Shen, Tong, and Wang]{yang2025mmada}
L.~Yang, Y.~Tian, B.~Li, X.~Zhang, K.~Shen, Y.~Tong, and M.~Wang.
\newblock Mmada: Multimodal large diffusion language models.
\newblock \emph{arXiv preprint arXiv:2505.15809}, 2025{\natexlab{a}}.

\bibitem[Yang et~al.(2025{\natexlab{b}})Yang, Yu, Cui, and Wang]{yang2025reasonflux}
L.~Yang, Z.~Yu, B.~Cui, and M.~Wang.
\newblock Reasonflux: Hierarchical llm reasoning via scaling thought templates.
\newblock \emph{arXiv preprint arXiv:2502.06772}, 2025{\natexlab{b}}.

\bibitem[Ye et~al.(2024)Ye, Gao, Gong, Zheng, Jiang, Li, and Kong]{ye2024beyond}
J.~Ye, J.~Gao, S.~Gong, L.~Zheng, X.~Jiang, Z.~Li, and L.~Kong.
\newblock Beyond autoregression: Discrete diffusion for complex reasoning and planning.
\newblock \emph{arXiv preprint arXiv:2410.14157}, 2024.

\bibitem[Ye et~al.(2025)Ye, Xie, Zheng, Gao, Wu, Jiang, Li, and Kong]{dream2025}
J.~Ye, Z.~Xie, L.~Zheng, J.~Gao, Z.~Wu, X.~Jiang, Z.~Li, and L.~Kong.
\newblock Dream 7b, 2025.
\newblock URL \url{https://hkunlp.github.io/blog/2025/dream}.

\bibitem[Yu et~al.(2025{\natexlab{a}})Yu, Zhang, Zhu, Yuan, Zuo, Yue, Dai, Fan, Liu, Liu, et~al.]{yu2025dapo}
Q.~Yu, Z.~Zhang, R.~Zhu, Y.~Yuan, X.~Zuo, Y.~Yue, W.~Dai, T.~Fan, G.~Liu, L.~Liu, et~al.
\newblock Dapo: An open-source llm reinforcement learning system at scale.
\newblock \emph{arXiv preprint arXiv:2503.14476}, 2025{\natexlab{a}}.

\bibitem[Yu et~al.(2025{\natexlab{b}})Yu, Ma, and Wang]{yu2025dimple}
R.~Yu, X.~Ma, and X.~Wang.
\newblock Dimple: Discrete diffusion multimodal large language model with parallel decoding.
\newblock \emph{arXiv preprint arXiv:2505.16990}, 2025{\natexlab{b}}.

\bibitem[Zelikman et~al.(2024)Zelikman, Wu, Mu, and Goodman]{zelikman2024star}
E.~Zelikman, Y.~Wu, J.~Mu, and N.~D. Goodman.
\newblock Star: Self-taught reasoner bootstrapping reasoning with reasoning.
\newblock In \emph{Proc. the 36th International Conference on Neural Information Processing Systems}, volume 1126, 2024.

\bibitem[Zhang et~al.(2023)Zhang, Gu, Wu, Zhai, Susskind, and Jaitly]{zhang2023planner}
Y.~Zhang, J.~Gu, Z.~Wu, S.~Zhai, J.~Susskind, and N.~Jaitly.
\newblock Planner: Generating diversified paragraph via latent language diffusion model.
\newblock \emph{Advances in Neural Information Processing Systems}, 36:\penalty0 80178--80190, 2023.

\bibitem[Zhao et~al.(2025)Zhao, Gupta, Zheng, and Grover]{zhao2025d1}
S.~Zhao, D.~Gupta, Q.~Zheng, and A.~Grover.
\newblock d1: Scaling reasoning in diffusion large language models via reinforcement learning.
\newblock \emph{arXiv preprint arXiv:2504.12216}, 2025.

\bibitem[Zheng et~al.(2024)Zheng, Chen, Mao, Liu, Zhu, and Zhang]{zheng2024masked}
K.~Zheng, Y.~Chen, H.~Mao, M.-Y. Liu, J.~Zhu, and Q.~Zhang.
\newblock Masked diffusion models are secretly time-agnostic masked models and exploit inaccurate categorical sampling.
\newblock \emph{arXiv preprint arXiv:2409.02908}, 2024.

\bibitem[Zhu et~al.(2025)Zhu, Wang, Nie, Zhang, Wu, Hu, Zhou, Chen, Lin, Wen, et~al.]{zhu2025llada}
F.~Zhu, R.~Wang, S.~Nie, X.~Zhang, C.~Wu, J.~Hu, J.~Zhou, J.~Chen, Y.~Lin, J.-R. Wen, et~al.
\newblock Llada 1.5: Variance-reduced preference optimization for large language diffusion models.
\newblock \emph{arXiv preprint arXiv:2505.19223}, 2025.

\bibitem[Zou et~al.(2025)Zou, Yang, Gu, Qiu, Shen, He, and Wang]{zou2025reasonflux}
J.~Zou, L.~Yang, J.~Gu, J.~Qiu, K.~Shen, J.~He, and M.~Wang.
\newblock Reasonflux-prm: Trajectory-aware prms for long chain-of-thought reasoning in llms.
\newblock \emph{arXiv preprint arXiv:2506.18896}, 2025.

\end{thebibliography}

\clearpage

\appendix

\section{Theoretical Results}
\label{proofsect}

\begin{proposition}[Token-wise return and advantage from step-wise recursions]
\label{prop:token-from-trace}
Let a trajectory $\tau$ be partitioned into trace steps $\tau(1),\dots,\tau(|\tau|)$, and let $t_j$ denote the unique step index with $j\in\tau(t_j)$.
For token-wise rewards $r_j$ and token-wise values $V^{\text{old}}_j$, define the step-wise aggregates
\[
r_t^{\star}\;\coloneqq\; \frac{1}{|\tau(t)|}\sum_{l\in\tau(t)} r_l,
\qquad
V_{t}^{\star,\text{old}}\;\coloneqq\; \frac{1}{|\tau(t)|}\sum_{l\in\tau(t)} V^{\text{old}}_l.
\]
Let the step-wise return and GAE be given by
\[
R_t^{\star} \;=\; r_t^{\star} + \gamma R_{t+1}^{\star},\quad R_{|\tau|+1}^{\star}=0,\qquad
\delta_t^{\star} \;=\; r_t^{\star} - V_{t}^{\star,\text{old}} + \gamma V_{t+1}^{\star,\text{old}},
\]
\[
A_t^{\star}\;=\;\sum_{k=0}^{|\tau|-t} (\gamma\lambda)^k \delta_{t+k}^{\star},\quad A_{|\tau|+1}^{\star}=0,\quad V_{|\tau|+1}^{\star,\text{old}}=0.
\]
Define token-wise quantities
\[
R_j \;\coloneqq\; r_j + \gamma R_{t_j+1}^{\star},
\qquad
A_j \;\coloneqq\; r_j - V^{\text{old}}_j + \gamma V_{t_j+1}^{\star,\text{old}} + \gamma\lambda A_{t_j+1}^{\star}.
\]
Then the following explicit expressions hold:
\begin{align}
R_j
&= r_j \;+\; \sum_{k=1}^{|\tau|-t_j} \gamma^k\,\frac{1}{|\tau(t_j + k)|}\sum_{l\in\tau(t_j+k)} r_l,
\tag{1}\label{eq:Rj-explicit}
\\
A_j
&= r_j - V^{\text{old}}_j \;+\; \sum_{k=1}^{|\tau|-t_j} (\gamma\lambda)^k\,\frac{1}{|\tau(t_j + k)|}
\sum_{l\in\tau(t_j+k)}\!\Big(r_l + \tfrac{1-\lambda}{\lambda}\,V^{\text{old}}_l\Big).
\tag{2}\label{eq:Aj-explicit}
\end{align}
For the boundary case $\lambda=0$, one has directly $A_j = r_j - V^{\text{old}}_j + \gamma V_{t_j+1}^{\star,\text{old}}$.
\end{proposition}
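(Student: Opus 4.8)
The plan is to prove the two explicit formulas \eqref{eq:Rj-explicit} and \eqref{eq:Aj-explicit} by first unrolling the step-wise recursions into closed form, and then substituting the results into the definitions of the token-wise quantities $R_j$ and $A_j$. The key observation is that both $R_t^{\star}$ and $A_t^{\star}$ are already defined by linear first-order recursions, so the closed forms follow by standard geometric telescoping; the token-wise definitions then simply shift the step index by one and prepend a token-specific term.

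\paragraph{Step 1: Closed form for the step-wise return.}
First I would unroll the return recursion $R_t^{\star} = r_t^{\star} + \gamma R_{t+1}^{\star}$ with terminal condition $R_{|\tau|+1}^{\star} = 0$. Iterating gives
\[
R_t^{\star} \;=\; \sum_{k=0}^{|\tau|-t} \gamma^k\, r_{t+k}^{\star}.
\]
Substituting $R_{t_j+1}^{\star} = \sum_{k=0}^{|\tau|-t_j-1} \gamma^k r_{t_j+1+k}^{\star}$ into the definition $R_j = r_j + \gamma R_{t_j+1}^{\star}$ and reindexing the sum (replacing $k$ by $k-1$) yields $R_j = r_j + \sum_{k=1}^{|\tau|-t_j} \gamma^k r_{t_j+k}^{\star}$. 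Expanding $r_{t_j+k}^{\star} = \tfrac{1}{|\tau(t_j+k)|}\sum_{l\in\tau(t_j+k)} r_l$ by its definition gives exactly \eqref{eq:Rj-explicit}.

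\paragraph{Step 2: Closed form for the advantage.}
The step-wise GAE is already given in unrolled form, $A_t^{\star} = \sum_{k=0}^{|\tau|-t}(\gamma\lambda)^k \delta_{t+k}^{\star}$, so the remaining work is to simplify the token-wise combination $A_j = r_j - V^{\text{old}}_j + \gamma V_{t_j+1}^{\star,\text{old}} + \gamma\lambda A_{t_j+1}^{\star}$. I would first expand $\gamma\lambda A_{t_j+1}^{\star} = \sum_{k=1}^{|\tau|-t_j} (\gamma\lambda)^k \delta_{t_j+k}^{\star}$ (reindexing as in Step 1), then substitute the definition $\delta_t^{\star} = r_t^{\star} - V_t^{\star,\text{old}} + \gamma V_{t+1}^{\star,\text{old}}$.

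\paragraph{Step 3: Telescoping the value terms.}
The main obstacle is the bookkeeping in the value contributions: after substituting $\delta^{\star}$, each term of the sum contributes both a $-V_{t_j+k}^{\star,\text{old}}$ and a $+\gamma V_{t_j+k+1}^{\star,\text{old}}$. These do \emph{not} fully telescope because of the $(\gamma\lambda)^k$ weighting; instead, the coefficient of each $V_{t_j+k}^{\star,\text{old}}$ ($k\ge 1$) becomes $-(\gamma\lambda)^k + (\gamma\lambda)^{k-1}\gamma = (\gamma\lambda)^k \cdot \tfrac{1-\lambda}{\lambda}$, which is exactly the factor appearing in \eqref{eq:Aj-explicit}. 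The leading term $\gamma V_{t_j+1}^{\star,\text{old}}$ from the $A_j$ definition cancels the $k=1$ boundary contribution $+\gamma V_{t_j+1}^{\star,\text{old}}$ that would otherwise appear with the wrong sign, leaving the clean form. Collecting the $r_l$ and $V^{\text{old}}_l$ terms and expanding $V_{t_j+k}^{\star,\text{old}}$ via its defining average produces \eqref{eq:Aj-explicit}. The $\lambda=0$ boundary case follows by direct inspection, since then only the $k=0$ term of $A_{t_j+1}^{\star}$ survives under the convention $0^0=1$ while the infinite-weighting factor $\tfrac{1-\lambda}{\lambda}$ is avoided by reading the displayed recursion directly; I would verify this separately to confirm consistency.

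\paragraph{Expected difficulty.}
I expect the coefficient-tracking in Step 3 to be the only real subtlety: the partial (rather than complete) telescoping of the value terms is where sign and index errors are most likely, so I would carry out that computation carefully, treating the $k=1$ boundary term explicitly before collapsing the general sum.
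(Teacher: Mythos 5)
Your proposal is correct and follows essentially the same route as the paper's proof: unroll the step-wise recursions, substitute into the token-wise definitions with the index shift $k\mapsto k+1$, and collect the $V^{\star,\text{old}}$ coefficients via $-(\gamma\lambda)^k + \gamma(\gamma\lambda)^{k-1} = (\gamma\lambda)^k\,\tfrac{1-\lambda}{\lambda}$, handling $\lambda=0$ separately. Your Step 3 coefficient bookkeeping, including the observation that the leading $\gamma V_{t_j+1}^{\star,\text{old}}$ term supplies the $k=1$ instance of this pattern, matches the paper's telescoping argument exactly.
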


\begin{proof}
\textbf{Return.}
Unroll the recursion $R_{t_j+1}^{\star}=\sum_{k=0}^{|\tau|-t_j-1}\gamma^k r_{t_j+1+k}^{\star}$ and substitute into
$R_j=r_j+\gamma R_{t_j+1}^{\star}$ to obtain
$R_j=r_j+\sum_{k=1}^{|\tau|-t_j}\gamma^k r_{t_j+k}^{\star}$.
Using $r_{t}^{\star}=\frac{1}{|\tau(t)|}\sum_{l\in\tau(t)} r_l$ gives \eqref{eq:Rj-explicit}.

\textbf{Advantage.}
Write
$A_{t_j+1}^{\star}=\sum_{k=0}^{|\tau|-t_j-1} (\gamma\lambda)^k \delta_{t_j+1+k}^{\star}$
and substitute $\delta_{t}^{\star}=r_t^{\star}-V_t^{\star,\text{old}}+\gamma V_{t+1}^{\star,\text{old}}$.
Reindex to start at $k=1$:
\[
A_j
= (r_j - V^{\text{old}}_j)
+ \sum_{k=1}^{|\tau|-t_j} (\gamma\lambda)^k r_{t_j+k}^{\star}
+ \gamma(1-\lambda)\sum_{k=1}^{|\tau|-t_j} (\gamma\lambda)^{k-1} V_{t_j+k}^{\star,\text{old}}.
\]
The last line follows from collecting the $V^{\star,\text{old}}$ terms into a telescoping series whose coefficient is
$\gamma(1-\lambda)(\gamma\lambda)^{k-1}$ for step $t_j+k$ (including $k=1$).
Now use
$r_{t}^{\star}=\frac{1}{|\tau(t)|}\sum_{l\in\tau(t)} r_l$ and
$V_{t}^{\star,\text{old}}=\frac{1}{|\tau(t)|}\sum_{l\in\tau(t)} V^{\text{old}}_l$,
and note that
$\gamma(1-\lambda)(\gamma\lambda)^{k-1}=(\gamma\lambda)^k\cdot\frac{1-\lambda}{\lambda}$ for $\lambda>0$.
This yields \eqref{eq:Aj-explicit}.
For $\lambda=0$, the series vanish and the definition gives the stated one-step TD form.
\end{proof}

\begin{remark}[Special cases]
\textbf{(i) $(\gamma,\lambda)=(1,1)$.} One has
\[
R_j = r_j + \sum_{k=1}^{|\tau|-t_j} \frac{1}{|\tau(t_j+k)|}\sum_{l\in\tau(t_j+k)} r_l,
\qquad
A_j = R_j - V^{\text{old}}_j,
\]
i.e., undiscounted Monte Carlo return and advantage with a token-wise baseline.

\textbf{(ii) $(\gamma,\lambda)=(1,0)$.} The return remains the undiscounted form with $\gamma=1$ as above, and
\[
A_j = r_j - V^{\text{old}}_j + V_{t_j+1}^{\star,\text{old}},
\]
which is the one-step TD(0) advantage using the trace-level baseline $V_{t_j+1}^{\star,\text{old}}$.
\end{remark}

\section{Experimental Details}
\label{expdetails}

\subsection{Supervised Finetuning Methods Explorations}
\label{misdetail}

In the demonstration of Section~\ref{mismatch}, the training data consist of CoT responses generated by the Qwen2.5-32B-Instruct model \citep{yang2024qwen2} on 2,000 randomly selected tasks from the OpenR1-MATH training set \citep{openr1}, filtered to exclude problems solvable by the Qwen2.5-7B-Instruct model. We then collect the traces of these data using the model under evaluation. For each instance, the model is used to iteratively select the two tokens with the highest confidence, conditioned on all previously processed tokens. This procedure produces a trace, which is then aggregated by grouping every $l / 2$ neighboring tokens to obtain the final trace, where each step has length $l$.
We train on the collected data for one epoch using 64 A100 GPUs, with a learning rate of $1 \times 10^{-6}$ for Dream and $1 \times 10^{-5}$ for SDAR.

\subsection{Prompt Templates}
\label{apptemp}

\begin{tcolorbox}[colback=blue!5!white, colframe=gray!50!black, title=Math Prompt Templates, breakable, enhanced jigsaw]
\textbf{Dream}

\begin{lstlisting}[style=prompt]
'''<|im_start|>system\nYou are a helpful assistant.<|im_end|>\n<|im_start|>user\nYou need to put your final answer in \\boxed{}. This is the problem:\n{{problem}}<|im_end|>\n<|im_start|>assistant\n'''
\end{lstlisting}

\textbf{LLaDA}

\begin{lstlisting}[style=prompt]
"""<|startoftext|><|start_header_id|>user<|end_header_id|>You need to put your final answer in \\boxed{}. This is the problem:\n{{problem}}<|eot_id|><|startoftext|><|start_header_id|>assistant<|end_header_id|>\n"""
\end{lstlisting}

\textbf{TraDo (non-thinking)}

\begin{lstlisting}[style=prompt]
'''<|im_start|>user\n{{problem}}\nPlease reason step by step, and put your final answer within \\boxed{}.<|im_end|>\n<|im_start|>assistant\n'''
\end{lstlisting}

\textbf{TraDo (thinking)}

\begin{lstlisting}[style=prompt]
'''<|im_start|>user\nYou need to put your final answer in \\boxed{}. This is the problem:\n{{problem}}<|im_end|>\n<|im_start|>assistant<think>\n'''
\end{lstlisting}

\textbf{TraDo (non-CoT)}

\begin{lstlisting}[style=prompt]
'''<|im_start|>user\nYou need to put your final answer in \\boxed{}. This is the problem:\n{{problem}}<|im_end|>\n<|im_start|>assistant\n'''
\end{lstlisting}
\end{tcolorbox}

\newpage
\begin{tcolorbox}[colback=blue!5!white, colframe=gray!50!black, title=Code Prompt Templates, breakable, enhanced jigsaw]
\textbf{Dream}

\begin{lstlisting}[style=prompt]
'''<|im_start|>system\nYou are a helpful assistant.<|im_end|>\n<|im_start|>user\nThis is the problem:\n{{problem}}\nYou should put your code in ```python ```. Use input() to read input and print() to produce output in your script. <|im_end|>\n<|im_start|>assistant\n'''
\end{lstlisting}

\textbf{LLaDA}

\begin{lstlisting}[style=prompt]
'''<|startoftext|><|start_header_id|>user<|end_header_id|>This is the problem:\n{{problem}}\n You should put your code in ```python ```. Use input() to read input and print() to produce output in your script. <|eot_id|><|startoftext|><|start_header_id|>assistant<|end_header_id|>\n'''
\end{lstlisting}

\textbf{TraDo (non-thinking)}

\begin{lstlisting}[style=prompt]
'''<|im_start|>user\nThis is the problem:\n{{problem}}\nYou should put your code in ```python ```. Use input() to read input and print() to produce output in your script. <|im_end|>\n<|im_start|>assistant\n'''
\end{lstlisting}

\textbf{TraDo (thinking)}

\begin{lstlisting}[style=prompt]
'''<|im_start|>user\nThis is the problem:\n{{problem}}\nYou should put your code in ```python ```. Use input() to read input and print() to produce output in your script. <|im_end|>\n<|im_start|>assistant<think>\n'''
\end{lstlisting}
\end{tcolorbox}

We use the same prompt template for both reinforcement learning and evaluation. TraDo and SDAR models share the same prompt templates. For long-CoT mode, we adopt the ``thinking'' prompt. The non-CoT prompt is only used for SDAR in our toy experiment in Section~\ref{mismatch}, which evaluates the effectiveness of different SFT methods in terms of how efficiently they teach the model to use CoT. The prompts are chosen to align as closely as possible with the model’s preferences and to reduce formatting errors (e.g., failure to extract the final answer).

\subsection{Evaluation Details}
\label{appeval}

For the LLaDA model, we use a block size of 32, a response limit of 1024 (no output truncation observed), a temperature of 0.1, and a further horizon size of 128. The further horizon size is defined as the additional number of tokens forwarded beyond the target block, excluding the tokens within the target block itself. We find that using this extra window reduces the performance drop caused by employing the KV-cache. When applying the dynamic sampling method, we set the unmasking threshold to $\mathcal{T} = 0.95$.

For the Dream model, we also use a temperature of 0.1, and a further horizon size of 128. For math problems, the response limit is set to 1600, and for coding problems, it is set to 1024. When using dynamic sampling, we use a block size of 4 with threshold $\mathcal{T} = 0.95$. Under static sampling, we use a block size of 32.

For the SDAR and TraDo instruction models, we keep the pretrained block size of 4, a response limit of 2000, and a temperature of 1.0. With dynamic sampling, we use a threshold of $\mathcal{T} = 0.9$ and $top\text{-}k = 0$ (i.e., all tokens are kept). For static sampling, we set $top\text{-}k = 1$, following \citep{JetAstra2025}.
For the long-CoT model TraDo-8B-Thinking, we set the response limit to 30,000 during evaluation, and only use dynamic sampling for evaluation.

For AIME2024, we evaluated each problem 20 times. For all other test datasets, we evaluated 3 times and report the average accuracy. 

We use the KV-cache in all evaluations to accelerate inference. For full attention mask models, we adapt and improve the fast-dllm framework \citep{wu2025fast}. For block diffusion models, we use jetengine \citep{JetAstra2025} to achieve acceleration.

\subsection{RL Sampling Details}
\label{appsample}

We describe here the parameters used in our experiments. For the Dream model, we set the block size to 32, the temperature to 0.8, the further horizon size to 128, and the response limit to 1024. During each step, we sample 56 problems, with 8 responses generated for each problem. We employ static decoding (one token per step) to enhance sampling quality \citep{gong2025diffucoder}, using the KV-cache. For the SDAR models, we use the default block size of 4, dynamic decoding with threshold $\mathcal{T} = 0.9$, $top\text{-}k = 0$, temperature 1.0, and $top\text{-}p = 1.0$ (also applied during evaluation). During each step, we sample 128 problems, with 32 responses generated for each problem.

\subsection{Training Details}
\label{traindetailapp}

For full attention models, we use a learning rate of $1 \times 10^{-6}$ for both semi-autoregressive and fully random masking fine-tuning methods. For adapted models such as Dream, autoregressive (AR) training can be applied with a learning rate of $1 \times 10^{-5}$ as a cold start. Since Dream shares the same architecture as an AR model, this provides a quick and effective way to teach the model specific language patterns. For block attention models, we use a learning rate of $1 \times 10^{-6}$. The masking probability is uniformly sampled between 0.1 and 0.9.

During RL training, we set the learning rate to $1 \times 10^{-6}$, with $\beta = 0.01$ and $\epsilon = 0.2$. When using a value model, we find no significant difference between the common parameter choices $(\gamma, \lambda) = (1, 1)$ and $(\gamma, \lambda) = (1, 0)$ \citep{hu2025open}. By default, we use the $k=3$ estimator for KL. For math tasks, we use binary outcomes as verifiable rewards and retain only those tasks with accuracy between 0.2 and 0.8 for training \citep{yu2025dapo}. For coding tasks, we use as the reward the proportion of unit tests passed by the generated solutions. To accelerate training, we use 64 A100 GPUs for our demonstration and ablation experiments. However, all experiments can be conducted on 8 A100 GPUs.

In Figure~\ref{fig:dyn_sta} (a), for Trace RL we use a shrinkage parameter of $s = 8$, together with an average response length of 380 during the RL process, resulting in approximately $380 / 8 = 47.5 < 50$ forward passes per data point. We augment the training data for fully random masking and coupled methods by applying 25 independent random masks, yielding 50 training samples per data point for the coupled method and 25 training samples per data point for the random masking method. Following \citet{gong2025diffucoder}, we keep the number of training samples for random masking at half that of the coupled method. Before RL training in Figure~\ref{fig:dyn_sta}(a), we collect 1.7k random SFT training samples from CodeContest \citep{li2022alphacode}, with solutions generated by the Qwen2.5-32B-Instruct model. We find that including the eos token in the SFT data is beneficial for stabilizing RL training.

One noteworthy point is the number of pad tokens that need to be trained for each data point, which we denote as $n_{pad}$. Setting a large $n_{pad}$ leads to excessively large logits for the pad token and can cause the model to terminate inference prematurely. In our RL training, we set $n_{pad} = 0$, which works well. However, in our long-CoT SFT step, we find that setting $n_{pad} = 0$ can potentially cause the model to never stop generating output during inference, although one remedy is to add an eos token as the stop token. Therefore, choosing an appropriate $n_{pad}$ is vital for stable training.

\subsection{TraceRL Algorithm Pipeline}
\label{algapp}

\begin{tcolorbox}[enhanced,breakable,
  colback=white, colframe=black, boxrule=0.4pt, arc=2pt,
  left=6pt,right=6pt,top=6pt,bottom=6pt]
  \captionof{algorithm}{TraceRL (Trajectory-Aware RL for DLMs)}\label{alg:tracerl}
\begin{algorithmic}[1]
\STATE \textbf{Input:}
\STATE \quad 1) Task set $\mathcal{D}_{\text{task}} = \{Q_1,\dots,Q_N\}$.
\STATE \quad 2) Policy $\pi_{\theta_p}$ parameterized by $\theta_p$.
\STATE \quad 3) (Optional) Value network $V_{\theta_v}$ parameterized by $\theta_v$; flag \texttt{UseValue} $\in\{\text{True},\text{False}\}$.
\STATE \quad 4) Iterations $M$, rollouts per task $G$, PPO clip $\epsilon$, KL coefficient $\beta$.
\STATE \quad 5) Discount $\gamma$, GAE parameter $\lambda$, learning rates $(\eta_p,\eta_v)$.
\STATE \quad 6) Shrinkage parameter $s$ (aggregate every $s$ neighboring trace steps).
\STATE \quad 7) (Optional) Value update interval $E_v$ (update value every $E_v$ iterations).

\STATE \textbf{Initialize:} $\theta_p$ (and $\theta_v$ if \texttt{UseValue}).

\FOR{$t = 1$ to $M$ \textbf{or} not converged}
  \STATE $\pi_{\text{old}} \leftarrow \pi_{\theta_p}$ \quad \textit{// freeze old policy for PPO ratios \& KL}
  \IF{\texttt{UseValue}}
    \STATE $V_{\text{old}} \leftarrow$ freeze copy of $V_{\theta_v}$ \quad \textit{// stop-gradient baseline}
  \ENDIF

  \STATE \textbf{Sample rollouts (trajectory traces):}
  \FOR{\textbf{each} minibatch of tasks $Q \sim \mathcal{D}_{\text{task}}$}
    \FOR{\textbf{repeat} $G$ times}
      \STATE Generate a response by policy decoding to obtain a trace 
      $\tau = (\tau(1), \ldots, \tau(|\tau|))$, where $\tau(t)$ is the token set decoded at step $t$.
      \STATE Obtain verifiable reward $r$ (or optionally process-level token/step rewards).
      \STATE \textbf{Shrink the trace:} $\tau^s \leftarrow \text{shrink}(\tau, s)$, i.e., group every $s$ neighboring steps.
      \IF{\texttt{UseValue}}
  \STATE \textbf{Value inference:} $V^{\mathrm{old}}_j \leftarrow (V_{\mathrm{old}}(\tau))_j$ for $j\in\tau$;\quad
  $V_{t}^{\star,\mathrm{old}} \leftarrow \frac{1}{|\tau(t)|}\sum_{j\in\tau(t)} V^{\mathrm{old}}_j$.
  \STATE \textbf{Build returns/advantages on the trace:}
  \STATE \quad Calculate step-wise reward: $r_t^{\star} $, return $R^{\star}_{t}$, advantage $A^{\star}_{t}$.
  \STATE \quad Map to tokens ($j\in\tau(t)$): $R_j \leftarrow r_j + \gamma R_{t+1}^{\star}$,\;\;
  $A_j \leftarrow (r_j - V^{\mathrm{old}}_j) + \gamma V_{t+1}^{\star,\mathrm{old}} + \gamma\lambda A_{t+1}^{\star}$.
  \STATE \quad (Optional) normalize advantages: $A_j \leftarrow \mathrm{normalize}(A_j)$.
\ENDIF
      \STATE Store $(Q, \tau^s, \text{advantages}, \pi_{\text{old}})$ for policy update.
    \ENDFOR
  \ENDFOR

  \STATE \textbf{Build grouped dataset} $\mathcal{D}_{\mathrm{grp}} \!\leftarrow\! \{(\mathbf{p}_g,\mathcal{O}_g,A_g,\pi_{\mathrm{old}})\}$.
\FOR{$e = 1$ to $K$}
  \STATE Sample minibatch $\mathcal{M} \subset \mathcal{D}_{\mathrm{grp}}$; compute ratios $r_j \!=\! \frac{\pi_{\theta_p}(o_j\mid \mathbf{p}_g)}{\pi_{\mathrm{old}}(o_j\mid \mathbf{p}_g)}$.
  \STATE \textbf{Policy step:} maximize $\mathcal{J}_{\mathrm{policy}}(\theta_p)$ (Eq.~\eqref{objpolicy}); \quad
  $\theta_p \leftarrow \theta_p + \eta_p \nabla_{\theta_p}\mathcal{J}_{\mathrm{policy}}$.
  \IF{\texttt{UseValue} \textbf{and} $(t \bmod E_v = 0)$}
    \STATE \textbf{Value step:} minimize $\mathcal{J}_{\mathrm{value}}(\theta_v)$ (Eq.~\eqref{objvalue}); \quad
    $\theta_v \leftarrow \theta_v - \eta_v \nabla_{\theta_v}\mathcal{J}_{\mathrm{value}}$.
  \ENDIF
\ENDFOR

\ENDFOR

\STATE \textbf{Output:} Trained policy $\pi_{\theta_p}$ (and value $V_{\theta_v}$ if used).
\end{algorithmic}
\end{tcolorbox}

\section{Additional Experimental Details}

\subsection{Average Response Length of TraDo-8B-Thinking}

\begin{table}[h]
\centering
\caption{Average response length of TraDo-8B-Thinking on different benchmarks.}
\label{tab:avglen}
\begin{tabular}{lccccc}
\toprule
 & \textbf{MATH500} & \textbf{AIME2024} & \textbf{GSM8K} & \textbf{LiveCodeBench} & \textbf{LiveBench} \\
\midrule
\textbf{Avg. length} & 5872 & 19397 & 1458 & 16291 & 16233 \\
\bottomrule
\end{tabular}
\end{table}

\subsection{Example of Long-CoT outputs}

\begin{tcolorbox}[colback=blue!5!white, colframe=gray!50!black, title=TraDo-8B-Thinking Example Output, breakable] \textbf{task:} What is the smallest positive perfect cube that can be written as the sum of three consecutive integers? \textbf{response:} Okay, so I need to find the smallest positive perfect cube that can be written as the sum of three consecutive integers. Let me start by understanding the problem step by step.

First, a perfect cube is a number that can be expressed as n³ where n is a positive integer. For example, 1, 8, 27, 64, etc. The question is asking for the smallest one of these that can also be written as the sum of three consecutive integers. 

Now, three consecutive integers can be represented as x, x+1, and x+2, right? So their sum would be x + (x+1) + (x+2). Let me compute that first. 

Adding those together: x + x + 1 + x + 2 = 3x + 3. So the sum of three consecutive integers is 3x + 3, which can be factored as 3(x + 1). 

Hmm, so the sum is always a multiple of 3. Therefore, any perfect cube that can be written as such a sum must be divisible by 3? Because it's three times some integer. 

So, the cube must be divisible by 3. Let me note that down. 

Therefore, the smallest perfect cube divisible by 3 is 27 (since 3³ = 27), but I need to check if 27 can be written as the sum of three consecutive integers. Let me check that. 

Suppose 3x + 3 = 27. Then 3x = 24 => x = 8. Therefore, the three numbers would be 8, 9, 10. Let's add them up: 8 + 9 + 10 = 27. Hey, that works! So 27 is indeed the sum of three consecutive integers. 

But wait, hold on. The question says \"the smallest positive perfect cube that can be written as the sum of three consecutive integers.\" If 27 works, then is that the answer? But before I jump to conclusions, maybe there's a smaller cube that also satisfies this? Let me check.

The next smaller cube after 27 is 8 (2³). Let's see if 8 can be written as the sum of three consecutive integers. 

Setting up 3x + 3 = 8. Then, 3x = 5 => x = 5/3, which is not an integer. Since x has to be an integer, this doesn't work. So 8 is out. 

The next is 1 (1³). Let's see, 3x + 3 = 1 => 3x = -2 => x = -2/3. Again, not an integer. So 1 is out. 

Therefore, the next cube is 64 (4³). Let's check that. Wait, but 64 is bigger than 27. Since 27 already works, but maybe there is another cube between 8 and 27? Since 8, 27, 64 are the cubes after 1, and 27 is the first one here. 

Wait, hold on. The cubes are 1, 8, 27, 64, 125, etc. So between 8 and 27 is 27 which works. So 27 is the first one after 8. But does 27 itself work? Yes, as we saw. 

But hold on, maybe the question is considering three consecutive integers in a different way? For example, maybe the integers can be negative? Let me check. 

Suppose we allow negative integers. Let's see if there's a smaller cube that could be expressed as the sum of three consecutive (including negative) integers.

Let's check 1. 1 can't, as above. 8 can't. 27 can. 64? Let's see. Let me check 64.

3x + 3 = 64. 3x = 61 => x = 20.333... Not integer. So no. 125? 3x + 3 = 125 => 3x = 122 => x = ~40.666 nope.

So even with negative numbers, the next cube after 27 might still be 27. But let me think again. Maybe there's a different way to write three consecutive integers? 

Wait, perhaps three consecutive numbers can be written as n-1, n, n+1? That would still be the same as x, x+1, x+2 where x = n-1. So yeah, same thing.

Alternatively, maybe three consecutive even or odd integers? But that would complicate the sum, but the problem didn't specify. The question just says three consecutive integers, so they can be any consecutive integers, whether positive or negative.

Alternatively, maybe I made a mistake in assuming that 27 is the first one. Let me confirm again.

The sum of three consecutive integers is 3x + 3 = 3(x + 1). So the cube must be divisible by 3.

Therefore, the cube must be a multiple of 3. The first cube divisible by 3 is 27 (3³). So yes, that must be the case. 

Therefore, unless there is a smaller cube that is divisible by 3, but 1 is not, 8 is not, so 27 is the first one.

Therefore, the answer should be 27. But hold on, let me check with another approach to be sure.

Suppose the three consecutive integers are n, n, n+1. Wait, no. Three consecutive integers are n, n+1, n+2. So their sum is 3n + 3. So we need 3(n + 1) to be a perfect cube.

Therefore, 3(n + 1) = k³, where k is an integer. Therefore, n + 1 = k³ / 3.

Since n must be an integer, k³ must be divisible by 3. Therefore, k must be divisible by 3, because 3 is prime.

Therefore, let k = 3m, where m is an integer. Then:

n + 1 = (27m³)/3 = 9m³. 

Therefore, n = 9m³ - 1. 

Therefore, the three integers would be 9m³ -1, 9m³, 9m³ +1. Their sum is 9m³ -1 + 9m³ + 9m³ +1 = 27m³, which is indeed a cube. 

Therefore, the cubes are all multiples of 27. Therefore, the smallest such cube is when m=1, which is 27*1³ = 27. 

Therefore, 27 is indeed the smallest perfect cube that can be written as the sum of three consecutive integers. 

Wait, but let me check m=0? If m=0, then n = -1, so the numbers would be -1, 0, 1. Their sum is 0, which is not a positive cube. So m=1 is the first positive case. 

Therefore, this seems solid. 

Therefore, the answer is 27. 

But hold on, let me think again. The problem says \"smallest positive perfect cube.\" Since 27 is positive and a perfect cube, so it's the first one. 

Alternatively, maybe there's a cube before 27 that can be expressed as the sum of three consecutive integers? Let me confirm again. The cubes are 1, 8, 27, 64, 125,...

Checking 1: 3x +3 =1: 3x= -2, nope.

8: 3x +3=8: 3x=5, nope.

27 works, so yeah, 27 is the first. Therefore, I think that's the answer. 

Wait, but maybe I made a mistake in the equation. Let me recast the problem.

....... (We omit some reasoning steps here.)

Therefore, 27 is indeed the first. 

Therefore, I think the answer is 27. 

Wait, but the problem says \"smallest positive perfect cube\". Since 27 is positive and perfect cube, and there are no smaller cubes that work, then yes, 27 is the answer. 

Wait, but just to make sure, is 0 considered? But 0 is not positive. So 27 is the first. 

Therefore, I think I can confidently say that the smallest positive perfect cube that can be written as the sum of three consecutive integers is 27. 

**Final Answer**

\boxed{27}

\end{tcolorbox}

\end{document}